\title{On the Relationship Between Monotone and Squared Probabilistic Circuits}
\author{
    Benjie Wang, 
    Guy Van den Broeck
}
\newcommand{\circuit}{\mathcal{C}}
\newcommand{\vars}{\bm{V}}
\newcommand{\varsval}{\bm{v}}
\newcommand{\var}{V}
\newcommand{\varsubset}{\bm{W}}
\newcommand{\complex}{\mathbb{C}}
\newcommand{\weight}{\theta}
\newcommand{\node}{n}
\newcommand{\inputs}{\textnormal{in}}
\newcommand{\pcfunc}{f}
\newcommand{\probfunc}{p}
\newcommand{\partitionfunc}{Z}
\newcommand{\scope}{\textnormal{sc}}
\newcommand{\reals}{\mathbb{R}}
\newcommand{\twolatents}{\bm{W}}
\newcommand{\onelatents}{\bm{U}}
\newcommand{\twolatentsval}{\bm{w}}
\newcommand{\onelatentsval}{\bm{u}}
\newcommand{\modelname}{Inception}
\newcommand{\modelshort}{IncPC}
\newtheorem{theorem}{Theorem}
\newtheorem{definition}{Definition}
\newtheorem{lemma}{Lemma}
\tikzstyle{sum1}=[draw,circle,inner sep=1pt,fill=yellow!30]
\tikzstyle{sum2}=[draw,circle,inner sep=1pt,fill=cyan!30]
\tikzstyle{prod}=[draw,circle,inner sep=1pt,fill=purple!30]
\tikzstyle{input}=[draw,circle,inner sep=1pt,minimum size=12pt]
\tikzstyle{lit}=[inner sep=2pt]
\begin{document}

\maketitle

\begin{abstract}
  Probabilistic circuits are a unifying representation of functions as computation graphs of weighted sums and products. Their primary application is in probabilistic modeling, where circuits with non-negative weights (\emph{monotone} circuits) can be used to represent and learn density/mass functions, with tractable marginal inference. %
    Recently, it was proposed to instead represent densities as the \emph{square} of the circuit function (\emph{squared} circuits); this allows the use of negative weights while retaining tractability, and can be exponentially more expressive efficient than monotone circuits. Unfortunately, we show the reverse also holds, meaning that monotone circuits and squared circuits are incomparable in general. %
    This raises the question of whether we can reconcile, and indeed improve upon the two modeling approaches. We answer in the positive by proposing \modelname{} PCs, a novel type of circuit that naturally encompasses both monotone circuits and squared circuits as special cases, and employs complex parameters. %
    Empirically, we validate that \modelname{} PCs can outperform both monotone and squared circuits on a range of tabular and image datasets.
\end{abstract}

\begin{links}
     \link{Code}{https://github.com/wangben88/InceptionPCs}
\end{links}

\section{Introduction}

The philosophy of tractable probabilistic modeling advocates for modeling high-dimensional probability distributions using model architectures that support exact and efficient inference for various probabilistic queries \emph{by-design}. This property makes them extremely useful for probabilistic reasoning; for example, tractable models have found wide-ranging applications from enhancing and controlling intractable generative models \citep{ZhangICML23,LiuICLR24}, to neuro-symbolic AI \citep{AhmedNeurIPS22,AhmedNeurIPS23}, to causal discovery and inference \citep{wang2022trust,wang2023compositional}. 

The \emph{lingua franca} for tractable models is the probabilistic circuits framework (PC) \citep{ProbCirc20}, which specify functions using \emph{computation graphs} of sums and products, unifying many previous tractable models such as arithmetic circuits \cite{darwiche2003differential}, sum-product networks \cite{poon2011sum} and cutset networks \cite{rahman2014cutset}.  
The standard approach to modeling is to directly represent the probability distribution (density/mass function) using a PC. In order to enforce non-negativity of the PC output, one typically restricts to the PC to have non-negative parameters; these are known as \emph{monotone} PCs \citep{darwiche2003differential,poon2011sum}. However, recent works have also shown that there exist many tractable classes of probability distributions that provably cannot be expressed in this way \citep{zhang2020relationship,yu2023characteristic,broadrick2024polynomial}.

This motivates the development of new approaches for \emph{practically constructing} and \emph{learning} generalized tractable models. To this end, \citet{LoconteICLR24} recently proposed \emph{squared circuits}, where the probability distribution is defined to be (proportional to) the \emph{square} of the circuit function. In this formulation, a PC can employ real (possibly negative parameters) while still defining a valid probability distribution. This was shown theoretically to lead to an exponential advantage in expressive efficiency compared to monotone PCs, for certain classes of distributions.

In this work, we reexamine monotone and squared (structured-decomposable) PCs, and show that they are incomparable in general: either can be exponentially more expressive efficient than the other. Motivated by this observation, we draw on the latent variable interpretation of PCs \citep{peharz2016latent} and show an elegant connection between the two types of circuits; namely, that they simply correspond to summing the latent variables \emph{outside or inside the square}, i.e. a deep sum-of-squares or square-of-sums. 
By combining these two types of latent variables, we introduce a novel class of tractable models representing deep \emph{sum-of-square-of-sums}, which we call \modelname{} PCs (\modelshort), strictly generalizing and extending (structured-decomposable) monotone and squared PCs.

We further investigate learning Inception PCs effectively at scale. Following recent trends in PC learning \citep{peharz2020ratspn,peharz2020einsum,liu2021tractable,mari2023unifying}, we design an efficient tensorized implementation of Inception PCs that subsumes tensorized (structured-decomposable) monotone and squared PCs. %
Empirical results validate the improved expressive efficiency of Inception PCs on density estimation benchmarks. %

Our contributions can be summarized as follows: 

\begin{itemize}
    \item Theoretically, we show that monotone circuits can be \emph{exponentially} smaller than squared circuits for a simple class of distributions, meaning that 
    monotone and squared circuits are incomparable in general in terms of expressive efficiency (Section \ref{sec:exp_efficiency});
    \item We analyze both monotone and squared circuits from a latent variable perspective, showing that they can be interpreted as deep \emph{sums-of-squares} and \emph{squares-of-sums} respectively. We then propose a novel class of tractable circuits, Inception PCs (\modelshort), which generalize and extend both types of circuits as a deep \emph{sum-of-square-of-sums} and can employ complex parameters (Section \ref{sec:model}). %
    \item We propose an efficient tensorized architecture for Inception PCs, enabling their application to large-scale datasets (Section \ref{sec:tensorized});
    \item Empirically, we demonstrate that (i) combining both types of latents, together with complex parameters, can improve performance; and (ii) \modelname{} PCs can outperform monotone PCs and squared PCs on challenging benchmarks including downscaled versions of ImageNet, even when normalized for computation time (Section \ref{sec:experiments}).
\end{itemize}
In concurrent work, \citet{loconte2024sumsquarescircuits} proved an exponential separation between monotone and squared circuits (i.e., our Theorem \ref{thm:monotone_succinct}) using a similar family of separating functions. They also proposed two new model classes, SOCS and $\mu$SOCS, which generalize squared circuits. We discuss in detail connections with our Inception PCs in Appendix \ref{apx:other_models}.

\section{Preliminaries}

\paragraph{Notation} We use capital letters to denote variables and lowercase to denote their assignments/values (e.g. $X, x$). We use boldface (e.g. $\bm{X}, \bm{x}$) to denote sets of variables/assignments.

Probabilistic circuits are \emph{computation graphs} representing functions constructed by hierarchical compositions of weighted sums and products. %

\begin{definition}[Probabilistic Circuit]
A probabilistic circuit $\circuit$ over a set of variables $\vars$ is a rooted DAG consisting of three types of nodes $\node$: input, product and sum nodes. Each input node $\node$ is a leaf encoding a function $\pcfunc_{\node}: \varsubset \to \reals$ for some $\varsubset \subseteq \vars$, and for each internal (product or sum) node $\node$, denoting the set of children (i.e. nodes $\node'$ for which $\node \rightarrow \node'$) by $\inputs(\node)$, we define:
\begin{equation}
    \pcfunc_{\node} = \begin{cases}
        \prod_{\node_i \in \inputs(\node)} \pcfunc_{\node} & \text{if } \node \text{ is product;} \\
        \sum_{\node_i \in \inputs(\node)} \weight_{\node, \node_i} \pcfunc_{\node_i} & \text{if } \node \text{ is sum.} \\
    \end{cases}
\end{equation}
where each sum node has a set of weights $\{\weight_{\node, \node_i}\}_{\node_i \in \inputs(\node)}$ with $\weight_{\node, \node_i} \in \reals$. Each node $\node$ thus encodes a function over a set of variables $\scope(\node)$, which we call its \emph{scope}; this is given by $\scope(\node) = \bigcup_{\node_i \in \inputs(\node)} \scope(\node_i)$ for internal nodes. The function encoded by the circuit $\pcfunc_{\circuit}$ is the function encoded by its root node. The size of a probabilistic circuit $|\circuit|$ is defined to be the number of edges in its DAG.

\end{definition}

In this paper, we will assume that sum and product nodes alternate; this is without loss of generality as this property can be enforced on a PC at most a linear increase in size. 
A key feature of the sum-product structure of probabilistic circuits is that they allow for efficient (linear-time) computation of marginals, for example the partition function $\partitionfunc = \sum_{\varsval} \pcfunc(\varsval)$\footnote{alternatively, $\int$ in the case of continuous variables}, if they are \emph{smooth} and \emph{decomposable}:

\begin{definition}[Smoothness, Decomposability]
    A probabilistic circuit is \emph{smooth} if for every sum node $\node$, its inputs $\node_i$ have the same scope. A probabilistic circuit is \emph{decomposable} if for every product node $\node$, its inputs have disjoint scope.
\end{definition}

We will also need a stronger version of decomposability  that enables circuits to be multiplied together efficiently \citep{pipatsrisawat08compilation,vergari2021compositional}:

\begin{definition}[Structured Decomposability]
    A smooth and decomposable probabilistic circuit is \emph{structured-decomposable} if any two product nodes $\node, \node'$ with the same scope decompose in the same way.
\end{definition}

Structured-decomposability is commonly enforced when learning PCs from data \citep{liu2021tractable} as well as when compiling a circuit from another model such as a Bayesian network \citep{choi2013compiling}.

\section{Expressive Efficiency of Monotone and Squared Structured-Decomposable Circuits} \label{sec:exp_efficiency}

One of the primary applications of probabilistic circuits is as a tractable representation of probability distributions. As such, we typically require the function output of the circuit to be a non-negative real. The usual way to achieve this is to enforce non-negativity of the weights and input functions:
 
\begin{definition}[Monotone PC]
    A probabilistic circuit is \emph{monotone} if all weights are non-negative reals, and all input functions map to the non-negative reals.
\end{definition}

Given a \emph{monotone} PC $\circuit$, one can define a probability distribution $\probfunc_1(\vars) := \frac{\pcfunc_{\circuit}(\vars)}{\partitionfunc_{\circuit}}$ where $\partitionfunc_{\circuit}$ is the partition function of the PC. However, this is not the only way to construct a non-negative function. In \citet{LoconteICLR24}, it was proposed to instead use $\pcfunc_{\circuit}$ to represent a \emph{real} (i.e. possibly negative) function, by allowing for real weights/input functions; this can then be squared to obtain a non-negative function. That is, we define $\probfunc_2(\vars) := \frac{\pcfunc_{\circuit}(\vars)^2}{\sum_{\varsval}\pcfunc_{\circuit}(\varsval)^2}$.

In order for $\sum_{\varsval}\pcfunc_{\circuit}(\varsval)^2$ to be tractable to compute, a sufficient condition is for the circuit $\circuit$ to be structured-decomposable; one can then explicitly construct a smooth and (structured-)decomposable circuit $\circuit^2$ such that $\pcfunc_{\circuit^2}(\vars) = \pcfunc_{\circuit}(\vars)^2$  of size and in time $O(|\circuit|^2)$ \citep{vergari2021compositional}. Then we have that $\probfunc_2(\vars) = \frac{\pcfunc_{\circuit^2}(\vars)}{\partitionfunc_{\circuit^2}}$, i.e. the distribution induced by the PC $\circuit^2$. Crucially, the circuit $\circuit^2$ is not necessarily monotone; squaring thus provides an alternative means of constructing PCs that represent non-negative functions. 
In fact, it is known that squared real PCs can be exponentially more expressive efficient than structured-decomposable monotone PCs for representing probability distributions:

\begin{theorem} \label{thm:sq_succinct} \citep{LoconteICLR24} 
    There exists a class of non-negative functions $p(\bm{V})$ such that there exist structured-decomposable PCs $\circuit$ with $p(\bm{V}) = f_{\circuit}(\bm{V})^2$ of size polynomial in $|\bm{V}|$, but the smallest structured-decomposable monotone PC $\circuit'$ such that $p(\bm{V}) = f_{\circuit'}(\bm{V})$ has size $2^{\Omega(|\bm{V}|)}$.
\end{theorem}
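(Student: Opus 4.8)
The plan is to pass from circuits to matrices via the standard value-matrix characterization of structured decomposability, and then to reduce the whole statement to a known exponential gap between two matrix complexity measures. Concretely, fix a vtree node $v$ whose scope splits $\bm{V}$ into two (balanced) halves $\bm{X}$ and $\bm{Y}$, and associate to any function $g(\bm{X}, \bm{Y})$ its value matrix $M_g[\bm{x}, \bm{y}] = g(\bm{x}, \bm{y})$. The key facts I would invoke are: (i) a smooth structured-decomposable circuit whose vtree contains $v$ computes, at that cut, a decomposition $g(\bm{x},\bm{y}) = \sum_{k} a_k(\bm{x}) b_k(\bm{y})$ with at most $|\circuit|$ terms, so $\mathrm{rank}(M_g) \le |\circuit|$; and (ii) if in addition the circuit is monotone, the $a_k, b_k$ are non-negative, so $\mathrm{rank}_{+}(M_p) \le |\circuit'|$, where $\mathrm{rank}_{+}$ denotes the non-negative rank. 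Thus a lower bound on the non-negative rank of $p$'s value matrices lower-bounds every monotone structured-decomposable circuit for $p$.

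For the squared side I would use the following identity. If $f$'s cut matrix $A = M_f$ has $\mathrm{rank}(A) = r$, write $f(\bm{x},\bm{y}) = \langle u(\bm{x}), w(\bm{y})\rangle$ with $u(\bm{x}), w(\bm{y}) \in \reals^r$. Then $p = f^2$ has cut matrix $M_p[\bm{x},\bm{y}] = \langle u(\bm{x}), w(\bm{y})\rangle^2 = \langle u(\bm{x}) u(\bm{x})^{\!\top}, w(\bm{y}) w(\bm{y})^{\!\top}\rangle$, i.e. $M_p = A \circ A$ is the entrywise (Hadamard) square of a rank-$r$ matrix, and the inner product of two PSD matrices of size $r$ exhibits a positive-semidefinite factorization of $M_p$ of size $r$; hence $\mathrm{rank}_{\mathrm{psd}}(M_p) \le r$. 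In other words, squared structured-decomposable circuits realize value matrices that are entrywise squares of low-rank matrices — and hence have small PSD rank — while monotone ones realize value matrices of small non-negative rank. The theorem therefore reduces to exhibiting a non-negative matrix family whose PSD rank is polynomial but whose non-negative rank is exponential.

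With this reduction in hand, I would take $M_n$ from the known separations between semidefinite and non-negative extension complexity: there are explicit non-negative matrices (e.g. slack matrices arising in communication-complexity and extended-formulation constructions) with $\mathrm{rank}_{\mathrm{psd}}(M_n) = \mathrm{poly}(n)$ yet $\mathrm{rank}_{+}(M_n) = 2^{\Omega(n)}$. I would realize the rows and columns of $M_n$ as assignments $\bm{x}, \bm{y} \in \{0,1\}^n$, set $p_n$ to be the function with $M_{p_n} = M_n$ at the balanced cut, and use the small (indeed Hadamard-square-root) factorization to lay down a structured-decomposable real circuit $\circuit$ of size $\mathrm{poly}(n)$ with $f_{\circuit}^2 = p_n$: the vectors $u(\bm{x})$, $w(\bm{y})$ become the sum weights on either side of the cut, and the $O(|\circuit|^2)$ squaring construction of the excerpt certifies tractability. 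The matching lower bound is then immediate from $\mathrm{rank}_{+}(M_{p_n}) = 2^{\Omega(n)}$ via fact (ii).

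I expect two steps to carry the real difficulty. The first is the non-negative rank lower bound itself: this is the deep ingredient, proved by rectangle-cover / fooling-set or information-theoretic (nondeterministic/quantum communication) arguments on $M_n$; I would import rather than reprove it, but verifying that the chosen family is simultaneously a low-rank Hadamard square — so that $\sqrt{M_n}$ genuinely admits a small structured-decomposable circuit with a single consistent vtree — is the delicate part of the packaging. The second obstacle is that the monotone lower bound must defeat \emph{every} vtree, not just the cut $v$: an adversarial monotone circuit may pick a vtree that avoids the hard bipartition. To close this I would design $p_n$ so that the exponential non-negative-rank bound holds at all sufficiently balanced bipartitions of $\bm{V}$ (every vtree contains such a cut), for instance by composing the hard gadget with a symmetric/expanding index structure so that each balanced cut embeds a copy of $M_n$. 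Establishing this uniform-over-cuts hardness while keeping the PSD side polynomial is where I would concentrate the technical effort.
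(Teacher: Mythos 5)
The paper does not actually prove this theorem --- it is imported verbatim from \citet{LoconteICLR24} --- so there is no in-paper proof to compare against; your outline is, however, essentially the argument in that reference: bound monotone structured-decomposable circuits below by the non-negative rank of the value matrix at a balanced vtree cut, observe that a squared circuit's value matrix is the Hadamard square of a low-rank matrix (hence has small PSD rank), import the exponential non-negative-rank lower bound from the extended-formulations literature, and check that it survives every balanced partition. The one caution is that you must instantiate the hard family as the square of an explicitly low-rank function, e.g.\ $p(\bm{V}) = (1-\bm{x}^{\top}\bm{y})^2$ (the unique-disjointness matrix of \citealp{LoconteICLR24}), rather than as an arbitrary slack matrix of small PSD rank, since small PSD rank alone does not certify a rank-$r$ Hadamard square root and hence does not yield the polynomial-size structured-decomposable circuit $\circuit$ you need on the upper-bound side.
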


However, we now show that, in fact, the other direction also holds: monotone PCs can also be exponentially more expressive efficient than squared (real) PCs.

\begin{restatable}{theorem}{thmMonoSuccinct} \label{thm:monotone_succinct}
    There exists a class of non-negative functions $p(\bm{V})$, such that there exist monotone structured-decomposable PCs $\circuit$ with $p(\bm{V}) = f_{\circuit}(\bm{V})$ of size polynomial in $|\bm{V}|$, but the smallest structured-decomposable PC $\circuit'$ such that $p(\bm{V}) = f_{\circuit'}(\bm{V})^2$ has size $2^{\Omega(|\bm{V}|)}$. %
\end{restatable}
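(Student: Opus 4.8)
The plan is to prove the statement via rank arguments on the value matrices induced by structured decomposability, exhibiting a single explicit family $p(\bm V)$ engineered so that non-negativity makes it cheap but taking a square root makes it expensive. First I would set up the standard lower-bound machinery. For any structured-decomposable circuit computing a function $g$, and any balanced partition of $\bm V$ into $(\bm X,\bm Y)$ realized by its vtree, the value matrix $M_g[\bm x,\bm y]:=g(\bm x,\bm y)$ decomposes as a sum of at most $|\circuit|$ products of an $\bm X$-function with a $\bm Y$-function, so $\mathrm{rank}(M_g)\le|\circuit|$. Now suppose $p=f_{\circuit'}^2$ for a structured-decomposable $\circuit'$. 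Since $f_{\circuit'}(\bm v)^2=p(\bm v)$, we have $f_{\circuit'}(\bm v)=s(\bm v)\sqrt{p(\bm v)}$ for some sign function $s:\{\bm v:p(\bm v)>0\}\to\{-1,+1\}$; the magnitude is \emph{forced} and only the signs are free. Hence $|\circuit'|\ge\mathrm{rank}(M_{s\sqrt p})$, where $M_{s\sqrt p}[\bm x,\bm y]=s(\bm x,\bm y)\sqrt{p(\bm x,\bm y)}$, so to lower-bound \emph{all} squared circuits it suffices to lower-bound $\min_s\mathrm{rank}(M_{s\sqrt p})$ over all sign patterns, at one conveniently chosen balanced cut. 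Conversely, small non-negative rank of the value matrices at every vtree cut yields a small monotone structured-decomposable circuit. Thus the theorem reduces to constructing a non-negative $p$ whose value matrices have small non-negative rank (cheap monotone) yet whose every \emph{signed} entrywise square root has rank $2^{\Omega(|\bm V|)}$ (expensive squared).

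Next I would give the construction. Crucially, $p$ must not factor under \emph{any} vtree, since otherwise its positive square root would factor identically and be cheap to square; the hardness must therefore come from genuine mixture structure rather than a product. A natural candidate is a mixture of two product measures, $p=\tfrac12\prod_i q_0(X_i,Y_i)+\tfrac12\prod_i q_1(X_i,Y_i)$, over paired variables $\bm V=\{(X_i,Y_i)\}_{i=1}^{n}$. The pairing does double duty. It gives the monotone upper bound directly: under the vtree that groups each pair $(X_i,Y_i)$ and combines pairs in a balanced binary tree, each product measure is decomposable and their mixture is a single top sum node, yielding a smooth, structured-decomposable, manifestly monotone circuit of size $O(n)$. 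It also lets me handle \emph{all} vtrees in the lower bound: every vtree has a balanced node whose scope splits $\Omega(n)$ of the pairs, and restricting the unsplit variables to fixed values yields a submatrix whose rank lower-bounds the full value matrix, reducing every vtree to a canonical ``split-pairs'' cut. On that cut the all-positive square root of $p$ is $\sqrt{\,\prod_i a_i+\prod_i b_i\,}$, with $a_i,b_i$ the local factors across the cut: an entrywise square root of a rank-two matrix, which one expects to have full rank by the rank-inflation of non-linear entrywise maps.

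The hard part will be the sign-robust rank lower bound, i.e.\ showing $\mathrm{rank}(M_{s\sqrt p})$ is exponential for \emph{every} $s$, not merely for the all-positive one. This is genuinely harder than ordinary circuit lower bounds: the usual tools bound the rank of a \emph{fixed} matrix, whereas here the adversary chooses the signs to try to collapse the rank, and a bad pattern can in principle cancel structure. Two features are essential to defeat this. First, $p$ must be strictly positive on the relevant block, so that $s$ only flips signs and can never exploit zeros; any support-based bound would apply to $p$ itself and destroy the monotone upper bound. Second, the magnitudes $\sqrt{p(\bm x,\bm y)}$ must force high rank on their own. The route I would take is to analyze $2\times2$ minors: a minor with rows $\bm x_1,\bm x_2$ and columns $\bm y_1,\bm y_2$ admits \emph{no} signing making it rank one precisely when $p_{11}p_{22}\neq p_{12}p_{21}$, since the two candidate signed determinants are $\sqrt{p_{11}p_{22}}\pm\sqrt{p_{12}p_{21}}$ and positivity rules out vanishing of the sum; choosing $q_0,q_1$ so that $p_{11}p_{22}\neq p_{12}p_{21}$ for all such minors makes every $2\times2$ minor sign-robustly non-degenerate. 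I would then promote this local non-degeneracy to a global exponential bound valid for all $s$, either by an analytic expansion of $\sqrt{\,\prod_i a_i+\prod_i b_i\,}$ into coordinate-separated terms whose coefficients remain linearly independent after any signing, or by reducing $\min_s\mathrm{rank}$ to a known sign-robust rank measure. Establishing this last implication rigorously is the crux of the proof; the rank-to-size reductions, the $O(n)$ monotone construction, and the reduction over vtrees are all routine once it is in place.
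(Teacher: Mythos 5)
Your overall framework (reduce to a balanced cut, bound $|\circuit'|$ below by the rank of the value matrix of $s\sqrt{p}$, and then lower-bound $\min_s\mathrm{rank}(M_{s\sqrt p})$ over all sign patterns) matches the paper's strategy, and your monotone upper bound would be fine. But the proof has a genuine gap exactly where you flag it: you never establish the sign-robust exponential rank lower bound, and the tools you sketch do not get there. Showing that no signing makes any $2\times 2$ minor rank one is far too weak --- a matrix can have every $2\times2$ minor nonsingular and still have rank $3$ --- so there is no route from that local non-degeneracy to $\mathrm{rank}\geq 2^{\Omega(d)}$. The ``analytic expansion'' of $\sqrt{\prod_i a_i+\prod_i b_i}$ is likewise not carried out, and making it robust to an adversarial sign pattern (which can try to realign the expansion term-by-term) is precisely the hard part. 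A secondary problem: your claim that every vtree has a balanced node splitting $\Omega(n)$ of the pairs is false --- a vtree whose leaves are the pairs $(X_i,Y_i)$ splits none of them --- and on an unsplit cut your $p$ has a rank-$2$ value matrix, so you would in any case need an exponential lower bound on the rank of an arbitrarily signed entrywise square root of a rank-$2$ positive matrix, which you do not have.

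The paper closes this gap with an arithmetic rather than analytic construction: it takes $p(\bm V)=n(\bm V)+1$ (the integer with binary representation $\bm V$, plus one), which is a trivially small monotone circuit $\sum_i 2^i\mathds{1}_{V_i=1}+1$. On any balanced cut the entries of $M_{s\sqrt p}$ are $\pm\sqrt{\iota(\bm x)+\iota(\bm y)+1}$; a counting argument over primes (via the prime number theorem) extracts $M=2^{(1/4-\epsilon)d}$ rows and columns whose diagonal entries square to distinct primes $p_i$, with no off-diagonal entry squaring to a multiple of any $p_i$. A field-extension argument (after Fawzi et al.) then shows the leading principal minors are nonzero \emph{for every choice of signs}, because $\sqrt{p_i}$ does not lie in the field generated by the other entries' square roots; this is what makes the rank bound immune to the adversary's sign pattern. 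If you want to salvage your mixture-of-products construction, you would need an analogous algebraic-independence mechanism for the entries of $\sqrt{u_0v_0^{\top}+u_1v_1^{\top}}$; absent that, the argument does not go through.
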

\begin{proof}
    (Sketch) The function class we use to separate the circuit classes is $p(\bm{V}) = \sum_{i=0}^{d-1} 2^{i} \mathds{1}_{V_i = 1} + 1$, i.e. a function that outputs the integer encoded in binary by the variables ($+ 1$). This can easily be represented as a structured-decomposable monotone circuit of linear size. We show a lower bound on the size of structured-decomposable circuits computing any square root $F(\bm{V})$ of this function, by (i) reducing to bounding the rank of a matrix representation of $F$ w.r.t. balanced partitions $(\bm{X}, \bm{Y})$ of $\bm{V}$ \citep{martens2014expressive}, using a standard techinque from communication complexity; (ii) lower bounding the number of distinct prime square roots in the matrix; (iii) showing the existence of a sufficiently large submatrix with full rank, thus lower bounding the rank of the original matrix \cite{fawzi2015positive}.
\end{proof}

This is perhaps surprising, as squaring PCs generate structured PCs with possibly negative weights, suggesting that they should be more general than monotone structured PCs. The key point is that not all circuits that represent a positive function (not even all monotone structured ones) can be generated by squaring. 
Taken together, these results are somewhat unsatisfying, as we know that there are some distributions better represented by an unsquared monotone PC, and some by a squared real PC. In the next section, we will investigate how to reconcile these different approaches to specifying probability distributions.

\section{Towards a Unified Model for Deep Sums-of-Squares-of-Sums} \label{sec:model}

In this section, we investigate the relationship between monotone and squared circuits in depth. Firstly, we show how to introduce complex parameterizations to squared circuits (Prop \ref{prop:conjugation}). Secondly, we provide a new interpretation of monotone and squared circuits as different ways of marginalizing out latent variables (Figure \ref{fig:circuit1}); and introduce a new tractable model, \modelname{} PCs, that combines the two approaches (Theorem \ref{thm:tract_model}).

\subsection{Complex Parameters} \label{sec:complex}
We begin by noting that, beyond simply negative parameters, one can also allow for \emph{complex} weights and input functions\footnote{Circuits with complex-valued input functions have previously been proposed by \citet{yu2023characteristic} in the context of representing characteristic functions (an alternative representation of probability distributions).}, i.e. take values in the field $\complex$. Then, to ensure the non-negativity of the squared circuit, we multiply a circuit with its \emph{complex conjugate}. That is:
\begin{align*}
    \probfunc_{2}(\vars) = \frac{|\pcfunc_{\circuit}(\vars)|^2}{\sum_{\varsval}|\pcfunc_{\circuit}(\varsval)|^2} = \frac{\overline{\pcfunc_{\circuit}(\vars)} \pcfunc_{\circuit}(\vars)}{\sum_{\varsval} \overline{\pcfunc_{\circuit}(\varsval)}\pcfunc_{\circuit}(\varsval)}
\end{align*}

As complex conjugation is a field isomorphism of $\complex$, taking a complex conjugate of a circuit is as straightforward as taking the complex conjugate of each weight and input function, retaining the same DAG as the original circuit. This allows us to efficiently compute $p_2(\bm{V})$ as a (smooth and structured decomposable) circuit:
\begin{restatable}[Tractability of Complex Conjugation]{proposition}{propConjugation} \label{prop:conjugation}
    Given a smooth and decomposable circuit $\circuit$, it is possible to compute a smooth and decomposable circuit $\overline{\circuit}$ such that $\pcfunc_{\overline{\circuit}}(\vars) = \overline{\pcfunc_{\circuit}(\vars)}$ of size and in time $O(|\circuit|)$. Further, if $\circuit$ is structured decomposable, then it is possible to compute a smooth and structured decomposable $\circuit^2$ s.t. $\pcfunc_{\circuit^2}(\vars) = \overline{\pcfunc_{\circuit}(\vars)} \pcfunc_{\circuit}(\vars)$ of size and in time $O(|\circuit|^2)$.
\end{restatable}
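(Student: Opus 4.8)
The plan is to derive both claims from a single algebraic fact: complex conjugation $z \mapsto \overline{z}$ is a field automorphism of $\complex$, i.e. it fixes $\reals$ pointwise and satisfies $\overline{z + z'} = \overline{z} + \overline{z'}$ and $\overline{z z'} = \overline{z}\,\overline{z'}$. For the first claim I would let $\overline{\circuit}$ be the circuit obtained from $\circuit$ by keeping the DAG, scopes and node types \emph{unchanged}, replacing every sum weight $\weight_{\node,\node_i}$ by its conjugate $\overline{\weight_{\node,\node_i}}$, and replacing every input function $\pcfunc_{\node}$ by $\overline{\pcfunc_{\node}} \colon \varsubset \to \complex$, $\varsval \mapsto \overline{\pcfunc_{\node}(\varsval)}$. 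Since only the labels on the edges and leaves change, $\overline{\circuit}$ has the same size as $\circuit$ and can be written down in time $O(|\circuit|)$, assuming (as is standard for the usual parametric input families) that conjugating a single input function is a constant-cost operation.

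Then I would prove $\pcfunc_{\overline{\circuit}}(\varsval) = \overline{\pcfunc_{\circuit}(\varsval)}$ by a bottom-up induction on the DAG, establishing the analogous identity at every node $\node$. The base case is immediate from the definition of $\overline{\pcfunc_{\node}}$. For a product node, $\overline{\prod_{\node_i} \pcfunc_{\node_i}} = \prod_{\node_i} \overline{\pcfunc_{\node_i}}$ by multiplicativity, and for a sum node, $\overline{\sum_{\node_i} \weight_{\node,\node_i} \pcfunc_{\node_i}} = \sum_{\node_i} \overline{\weight_{\node,\node_i}}\,\overline{\pcfunc_{\node_i}}$ by additivity and multiplicativity; in both cases the inductive hypothesis lets me substitute $\pcfunc_{\overline{\node_i}}$ for $\overline{\pcfunc_{\node_i}}$. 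Smoothness and decomposability are inherited verbatim because the scope of every node is unchanged.

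For the second claim I would first apply the construction above to obtain $\overline{\circuit}$, which is structured-decomposable whenever $\circuit$ is, with respect to the \emph{same} vtree, precisely because it reuses the DAG and scope decomposition of $\circuit$. This makes $\circuit$ and $\overline{\circuit}$ \emph{compatible}, so I can invoke the product (multiplication) algorithm for compatible structured-decomposable circuits of \citet{vergari2021compositional} to build a smooth, structured-decomposable circuit $\circuit^2$ with $\pcfunc_{\circuit^2} = \pcfunc_{\circuit} \cdot \pcfunc_{\overline{\circuit}}$ in size and time $O(|\circuit|\cdot|\overline{\circuit}|) = O(|\circuit|^2)$. By the first claim $\pcfunc_{\circuit^2}(\varsval) = \pcfunc_{\circuit}(\varsval)\,\overline{\pcfunc_{\circuit}(\varsval)}$, as required.

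The genuinely easy part is the conjugation itself — it is forced by the homomorphism property and changes nothing structurally — so I expect the only thing needing care to be the second claim's appeal to the product operation: I must verify that conjugation preserves the vtree (hence compatibility), so that the $O(|\circuit|^2)$ guarantee of the multiplication algorithm actually applies, and that the input families used are closed under conjugation so the leaves of $\circuit^2$ remain tractable. Both hold in the standard setting, but they are the steps where an implicit assumption could otherwise break the result.
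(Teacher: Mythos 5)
Your proposal is correct and follows essentially the same route as the paper's proof: conjugate the weights and input functions in place, verify correctness by a leaves-to-root induction using that conjugation is a ring homomorphism, and then invoke the compatibility-based product construction of \citet{vergari2021compositional} for the squared circuit. Your added remarks about vtree preservation and closure of input families under conjugation are sensible explicit versions of what the paper leaves implicit.
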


\subsection{Deep Sums-of-Squares-of-Sums: A Latent Variable Interpretation}

\begin{figure*}[t]
    \centering
    \begin{subfigure}{0.26\linewidth}
    \centering
        \begin{tikzpicture}[semithick]
\tikzstyle{sum1}=[draw,circle,inner sep=1.2pt, cyan, ultra thick]
        \tikzstyle{prod}=[draw,circle,inner sep=1.2pt,orange, ultra thick]
        \tikzstyle{input}=[draw, circle,inner sep=2pt,black,ultra thick]
        \begin{scope}
            \node[sum1] (sum1) at (0,-0.5) {$\bm{+}$};
            \node[prod] (prod1) at (-.8,-1.5) {$\bm{\times}$}; 
            \node[prod] (prod2) at (.8,-1.5) {$\bm{\times}$}; 

            \node[] (dummy1) at (-1.4, -2.2) {};
            \node[] (dummy2) at (-0.8, -2.2) {};
            \node[] (dummy3) at (0.2, -2.2) {};
            \node[] (dummy4) at (0.8, -2.2) {};

            \node[input] (Z0) at (-0.2, -2.2) {$\bm{\circ}$};%
            \node[input] (Z1) at (1.4, -2.2) {$\bm{\circ}$};%

            \node[] (Z0label) at (-0.2, -2.6) {\scriptsize $\llbracket Z = 0 \rrbracket$};
            \node[] (Z1label) at (1.4, -2.6) {\scriptsize $\llbracket Z = 1 \rrbracket$};

            \draw[->] (sum1) -- (prod1) node[midway, above left] {\scriptsize $3 + 2i$};
            \draw[->] (sum1) -- (prod2) node[midway, above right] {\scriptsize $1 - i$};
            
            \foreach \s/\t in {prod1/dummy1,prod1/dummy2,prod1/Z0,prod2/dummy3,prod2/dummy4,prod2/Z1} {
                \draw[dashed] (\s) edge (\t);
            }\
            \foreach \s/\t in {prod1/Z0,prod2/Z1} {
                \draw[-] (\s) edge (\t);
            }
        \end{scope}
    \end{tikzpicture}
        \caption{LVI of sum node $\node$}
        \label{fig:lvi}
    \end{subfigure}
    \begin{subfigure}{0.26\linewidth}
    \centering
        \begin{tikzpicture}[semithick]
\tikzstyle{sum1}=[draw,circle,inner sep=1.2pt, cyan, ultra thick]
        \tikzstyle{prod}=[draw,circle,inner sep=1.2pt,orange, ultra thick]
        \tikzstyle{input}=[draw, circle,inner sep=2pt,black,ultra thick]
        \begin{scope}
            \node[sum1] (sum1) at (0,-0.5) {$\bm{+}$};
            \node[prod] (prod1) at (-.8,-1.5) {$\bm{\times}$}; 
            \node[prod] (prod2) at (.8,-1.5) {$\bm{\times}$}; 

            \node[] (dummy1) at (-1.4, -2.2) {};
            \node[] (dummy2) at (-0.8, -2.2) {};
            \node[] (dummy3) at (0.2, -2.2) {};
            \node[] (dummy4) at (0.8, -2.2) {};

            \draw[->] (sum1) -- (prod1) node[midway, above left] {\scriptsize $13$};
            \draw[->] (sum1) -- (prod2) node[midway, above right] {\scriptsize $2$};
            
            \foreach \s/\t in {prod1/dummy1,prod1/dummy2,prod2/dummy3,prod2/dummy4} {
                \draw[dashed] (\s) edge (\t);
            }\
        \end{scope}
    \end{tikzpicture}
        \caption{Computing $\sum_{Z} |\pcfunc_{\node}(\vars, Z)|^2$}
        \label{fig:square_before_sum}
    \end{subfigure}
    \begin{subfigure}{0.44\linewidth}
    \centering
        \begin{tikzpicture}[semithick]
\tikzstyle{sum1}=[draw,circle,inner sep=1.2pt, cyan, ultra thick]
        \tikzstyle{prod}=[draw,circle,inner sep=1.2pt,orange, ultra thick]
        \tikzstyle{input}=[draw, circle,inner sep=2pt,black,ultra thick]
        \begin{scope}
            \node[sum1] (sum1) at (0,-0.5) {$\bm{+}$};
            \node[prod] (prod1) at (-3,-1.5) {$\bm{\times}$}; 
            \node[prod] (prod2) at (-0.8,-1.5) {$\bm{\times}$}; 

    \node[prod] (prod3) at (.8,-1.5) {$\bm{\times}$}; 
                \node[prod] (prod4) at( 3,-1.5) {$\bm{\times}$}; 

            \node[] (dummy1) at (-3.6, -2.2) {};
            \node[] (dummy2) at (-3.0, -2.2) {};
            \node[] (dummy3) at (-1.4, -2.2) {};
            \node[] (dummy4) at (-0.8, -2.2) {};
            \node[] (dummy5) at (0.2, -2.2) {};
            \node[] (dummy6) at (0.8, -2.2) {};
            \node[] (dummy7) at (2.4, -2.2) {};
            \node[] (dummy8) at (3.0, -2.2) {};

            \draw[->] (sum1) -- (prod1) node[midway, above left] {\scriptsize $13$};
            \draw[->] (sum1) -- (prod2) node[midway, left] {\scriptsize $5 - 5i$};
            \draw[->] (sum1) -- (prod3) node[midway, right] {\scriptsize $5 + 5i$};
            \draw[->] (sum1) -- (prod4) node[midway, above right] {\scriptsize $2$};
            
            \foreach \s/\t in {prod1/dummy1,prod1/dummy2,prod2/dummy3,prod2/dummy4,prod3/dummy5,prod3/dummy6,prod4/dummy7,prod4/dummy8} {
                \draw[dashed] (\s) edge (\t);
            }\
        \end{scope}
    \end{tikzpicture}
        \caption{Computing $|\sum_{Z} \pcfunc_{\node}(\vars, Z)|^2$}
        \label{fig:sum_before_square}
    \end{subfigure}
    \caption{Latent variable interpretation for squaring PCs. The sum node in Figure \ref{fig:lvi} has two children with complex weights and associated with different values of the latent $Z$. A sum-of-squares (Figure \ref{fig:square_before_sum}) gives a monotone PC, where the parameters necessarily become non-negative. A square-of-sums (Figure \ref{fig:sum_before_square}) leads to a squared PC, with four children each corresponding to the product of any two children from the original circuit.}
    \label{fig:circuit1}
\end{figure*}

In the latent variable interpretation (LVI) of probabilistic circuits \citep{peharz2016latent}, for every sum node, one assigns a categorical latent variable, where each state of the latent variable is associated with one of the inputs to the sum node; we show an example in Figure \ref{fig:lvi}.  In this interpretation, when performing inference in the probabilistic circuit, we marginalize over all of the latent variables beforehand.

In the case when the circuit is structured decomposable, one can assign latent variables with variable scopes (sets) appearing in the circuit, such that two sum nodes $n, n'$ with the same scope $\scope(n) = \scope(n')$ are associated with the same latent $Z_{\scope(n)}$. Then, writing $\bm{Z}$ for the set of all latents, the function represented by the PC can be expressed as:
\begin{equation}
    f_{\circuit}(\vars) = \sum_{\bm{z}} f_{\circuit}(\vars, \bm{z})
\end{equation}
where $f_{\circuit}(\vars, \bm{z})$ is a \emph{product of input functions} for any value of $\bm{z}$.\footnote{Concretely, $f_{\circuit}(\vars, \bm{z})$ is a function obtained by traversing the circuit top-down, selecting one child of every sum node according to the value $\bm{z}$, and all children of a prouct node; cf. also the notion of induced subcircuits/trees \citep{chan2006robustness,zhao2016unified}.} In other words, the distribution represented by the PC is fully factorized conditioned on a latent value $\bm{z}$.

However, interpreting these latent variables becomes tricky when we consider probability distributions defined by squaring circuits. The key question is, does one marginalize out the latent variables before or after squaring? We show both options in Figures \ref{fig:square_before_sum} and \ref{fig:sum_before_square}. In Figure \ref{fig:square_before_sum}, we square before marginalizing $Z$. In this case, each sum node weight is multiplied by its conjugate, and we are left with a sum node with non-negative real parameters. On the other hand, if we marginalize before squaring, we have a sum node with four children and complex parameters. 
Interestingly, the former case is very similar to directly constructing a monotone PC, while the latter is more like an explicit squaring without latent variables. This suggests that we can switch between monotone and squared PCs simply \emph{by deciding whether to sum the latent variables inside or outside the square}. 
Using this perspective, we propose the following model, which explicitly introduces both type of latent variables into the circuit, producing a deep \emph{sum-of-square-of-sums}:

\begin{definition}[Inception PC\footnote{The nomenclature is inspired by the deep(er) layering of summation and squaring in Inception PCs.}]
    An Inception PC (\modelshort) $\circuit_{\text{\modelname}}$ is a smooth and structured-decomposable probabilistic circuit over observed variables $\vars \cup \onelatents \cup \twolatents$. The probability distribution of an Inception PC is defined by:
    \begin{equation} \label{eqn:inception_math}
    \probfunc_{\text{\modelname}}(\vars) = 
\frac{\sum_{\onelatentsval} \left|\sum_{\twolatentsval} \pcfunc_{\circuit_{\text{\modelname}}}(\vars, \onelatentsval, \twolatentsval) \right|^2 }{\sum_{\varsval} \sum_{\onelatentsval} \left|\sum_{\twolatentsval} \pcfunc_{\circuit_{\text{\modelname}}}(\varsval, \onelatentsval, \twolatentsval) \right|^2 }
\end{equation}
\end{definition}

As $\bm{U}$-latents are outside the square, and $\bm{W}$-latents are inside the square, we will refer to them as 1-norm and 2-norm latents respectively. The next Theorem shows that, given an Inception PC, we can efficiently ``materialize'' it into a tractable PC over just the observed variables $\vars$ representing the Inception PC's distribution $\probfunc_{\text{\modelname}}(\vars)$:%

\begin{theorem}[Tractability of \modelname{}PC] \label{thm:tract_model}
    Given an Inception PC $\circuit_{\text{\modelname}}$, it is possible to compute a smooth and structured decomposable circuit $\circuit_{\text{mat}}$ such that $\pcfunc_{\circuit_{\text{mat}}}(\vars) = \sum_{\onelatentsval} \left|\sum_{\twolatentsval} \pcfunc_{\circuit_{\text{\modelname}}}(\vars, \onelatentsval, \twolatentsval) \right|^2 $ of size and in time $O(|\circuit|^2)$.
\end{theorem}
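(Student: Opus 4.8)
The plan is to realize the target function through three tractable circuit operations that mirror the ``inside/outside the square'' distinction motivating the model: first marginalize the inside latents $\twolatents$, then square via the conjugate product, then marginalize the outside latents $\onelatents$. To begin, I would write $g(\vars, \onelatentsval) := \sum_{\twolatentsval} \pcfunc_{\circuit_{\text{aug}}}(\vars, \onelatentsval, \twolatentsval)$ and note that this is simply the marginal of the smooth decomposable circuit $\circuit_{\text{aug}}$ over the variables $\twolatents$. Because $\circuit_{\text{aug}}$ is smooth and decomposable, this marginal is computable in time $O(|\circuit_{\text{aug}}|) = O(|\circuit|)$ by the standard algorithm, which sums out each indicator leaf $\llbracket \twolatent_{\scope(\node)} = j \rrbracket$ into the constant $1$. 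This produces a circuit $\circuit'$ over $\vars \cup \onelatents$ with $\pcfunc_{\circuit'}(\vars, \onelatentsval) = g(\vars, \onelatentsval)$. I would then verify that $\circuit'$ remains structured decomposable: the product nodes introduced by the augmentation all split their scope as $\scope(\node) \cup \{\onelatent_{\scope(\node)}\}$ in the same way across all sum nodes sharing a scope, and deleting the $\twolatents$ factors preserves this consistency.

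Next, treating $\vars \cup \onelatents$ as the variable set, I would apply Proposition \ref{prop:conjugation} to the structured decomposable circuit $\circuit'$ to obtain a smooth structured decomposable circuit $(\circuit')^2$ with $\pcfunc_{(\circuit')^2}(\vars, \onelatentsval) = \overline{g(\vars, \onelatentsval)}\, g(\vars, \onelatentsval) = |g(\vars, \onelatentsval)|^2$, of size $O(|\circuit'|^2) = O(|\circuit|^2)$. Finally, I would marginalize out the outside latents $\onelatents$ from $(\circuit')^2$, again in linear time, yielding the circuit $\circuit_{\text{\modelname}}$ with $\pcfunc_{\circuit_{\text{\modelname}}}(\vars) = \sum_{\onelatentsval} |g(\vars, \onelatentsval)|^2$, which is exactly the desired function. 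Summing the costs $O(|\circuit|)$, $O(|\circuit|^2)$, and $O(|\circuit|^2)$ gives total size and time $O(|\circuit|^2)$, and smoothness and structured decomposability are preserved throughout by Proposition \ref{prop:conjugation} and the marginalization algorithm.

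The step I expect to be the main obstacle is confirming that this pipeline computes a genuine \emph{sum of squared magnitudes} over the states of $\onelatents$, rather than the squared magnitude of a sum. The key is that $\onelatents$ is retained as an explicit, shared variable when $(\circuit')^2$ multiplies $\circuit'$ by its conjugate: at any assignment the indicator $\llbracket \onelatent_{\scope(\node)} = i \rrbracket$ takes the same value in both factors, so upon marginalizing $\onelatents$ the cross terms with $i \neq i'$ cancel, since $\sum_{u} \llbracket u = i \rrbracket \llbracket u = i' \rrbracket = \llbracket i = i' \rrbracket$, and only the diagonal squared terms survive. Establishing that this diagonalization is precisely what distinguishes marginalizing $\onelatents$ after squaring from doing so before — and that it matches the definition of $\probfunc_{\text{\modelname}}$ — is the crux; the remaining bookkeeping on scopes and size bounds then follows routinely from the two cited constructions.
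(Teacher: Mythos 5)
Your proposal is correct and follows essentially the same route as the paper's proof: marginalize the inner latents $\twolatents$ from $\circuit_{\text{aug}}$ (preserving smoothness and structured decomposability), apply Proposition \ref{prop:conjugation} to square the result, and then marginalize the outer latents $\onelatents$. You spell out more detail than the paper does — in particular the final marginalization over $\onelatents$ and the cross-term cancellation via the shared indicators, which the paper leaves implicit — but the argument is the same.
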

\begin{proof}
     The most systematic way to see this is to take advantage of the compositional inference framework of \citet{vergari2021compositional}.
    We can marginalize out $\twolatents$ from $\circuit_{\text{\modelname}}$ to obtain a PC $\circuit_1$ such that $\pcfunc_{\circuit_1}(\vars, \onelatents) = \sum_{\twolatentsval} \pcfunc_{\circuit_{\text{\modelname}}}(\vars, \onelatents, \twolatentsval)$, retaining smoothness and structured decomposability. Then the computation of the square is possible by Proposition \ref{prop:conjugation}, returning a smooth and structured-decomposable circuit $\circuit_2$ such that $\pcfunc_{\circuit_2}(\vars, \onelatents) = |\sum_{\twolatentsval} \pcfunc_{\circuit_{\text{\modelname}}}(\vars, \onelatents, \twolatentsval)|^2$. Finally, we can marginalize out $\bm{U}$ from this circuit to obtain the structured decomposable and smooth circuit $\circuit_{\text{mat}}$.
\end{proof}

In Figure \ref{fig:aug_inc}, we show a (fragment of) an example Inception PC, and its materialization via Theorem \ref{thm:tract_model}. In the rest of the paper, we will refer to this as the \emph{materialized} Inception PC. The point is that we can efficiently perform inference on the distribution of an Inception PC through standard PC inference procedures on the materialized \modelshort. Though this is not strictly necessary, we will assume (in accordance with the LVI for monotone and squared PCs) that $\bm{W}, \bm{U}$ are categoricals, and the input nodes with scope over these variables are indicators (e.g. $\llbracket W=1 \rrbracket$). %

This provides an elegant resolution to the tension between monotone and squared (real/complex) PCs. To retrieve a monotone PC, we need only set $\twolatents = \emptyset$; then there is no summation inside the square, and $\circuit_{\text{mat}}$ has the same structure as $\circuit_{\text{\modelname}}$ but with the parameters and input functions squared (and so non-negative real).\footnote{Interestingly enough, in this case we can relax the conditions of Theorem \ref{thm:tract_model} to require decomposability rather than structured decomposability, assuming that children of sum nodes correspond to different indicators and thus $\circuit_{\text{\modelname}}$ is deterministic. Multiplying a \emph{deterministic} circuit with itself (or its conjugate) is tractable in linear time \citep{vergari2021compositional}. %
} To retrieve a squared PC, we simply set $\onelatents = \emptyset$; then there is no summation outside the square. However, by using both types of latents, we obtain a generalized PC model that is strictly more expressive efficient than either individually\footnote{The special cases can also be \emph{learned} in the general case by making $f_{\circuit_{\text{\modelname{}}}}$ constant w.r.t. $\onelatents$ or $\twolatents$.}.

\begin{figure*}[t]
\begin{subfigure}{0.48\linewidth}
\centering
    \begin{tikzpicture}[semithick]
\tikzstyle{sum1}=[draw,circle,inner sep=1.2pt, cyan, ultra thick]
        \tikzstyle{prod}=[draw,circle,inner sep=1.2pt,orange, ultra thick]
        \tikzstyle{input}=[draw, circle,inner sep=2pt,black,ultra thick]
        \begin{scope}
            \node[sum1, minimum size=0.7cm] (sum1) at (0,-0.5) {$\bm{+}$};
            \node[prod] (prod1) at (-3,-1.5) {$\bm{\times_1}$}; 
            \node[prod] (prod2) at (-0.8,-1.5) {$\bm{\times_2}$}; 

    \node[prod] (prod3) at (.8,-1.5) {$\bm{\times_3}$}; 
                \node[prod] (prod4) at( 3,-1.5) {$\bm{\times_4}$};

            \node[input] (W0) at (-2.5, -2.8) {$\circ$};%
            \node[input] (W1) at (-1.3, -2.8) {$\circ$};%

            \node[] (W0label) at (-2.5, -3.2) {\scriptsize $\llbracket W = 0 \rrbracket$};
            \node[] (W1label) at (-1.3, -3.2) {\scriptsize $\llbracket W = 1 \rrbracket$};

            \node[input] (U0) at (-2.5+3.8, -2.8) {$\circ$};%
            \node[input] (U1) at (-1.3+3.8, -2.8) {$\circ$};%

            \node[] (U0label) at (-2.5+3.8, -3.2) {\scriptsize $\llbracket U = 0 \rrbracket$};
            \node[] (U1label) at (-1.3+3.8, -3.2) {\scriptsize $\llbracket U = 1 \rrbracket$};

            \draw[->] (sum1) -- (prod1) node[midway, above left] {\scriptsize $-2 + i$};
            \draw[->] (sum1) -- (prod2) node[midway, left] {\scriptsize $1$};
            \draw[->] (sum1) -- (prod3) node[midway, right] {\scriptsize $3i$};
            \draw[->] (sum1) -- (prod4) node[midway, above right] {\scriptsize $1 + 3i$};

            \draw[->] (prod1) -- (W0);
            \draw[->] (prod1) -- (U0);
            \draw[->] (prod2) -- (W1);
            \draw[->] (prod2) -- (U0);
            \draw[->] (prod3) -- (W0);
            \draw[->] (prod3) -- (U1);
            \draw[->] (prod4) -- (W1);
            \draw[->] (prod4) -- (U1);
        \end{scope}
    \end{tikzpicture}
    \caption{Inception PC $\circuit_{\text{\modelname{}}}$}
    \label{fig:incpc_example}
\end{subfigure}
\begin{subfigure}{0.48\linewidth}
\centering
    \begin{tikzpicture}[semithick]
\tikzstyle{sum1}=[draw,circle,inner sep=1.2pt, cyan, ultra thick]
        \tikzstyle{prod}=[draw,circle,inner sep=0.4pt,orange, ultra thick]
        \tikzstyle{input}=[draw, circle,inner sep=2pt,black,ultra thick]
        \begin{scope}
            \node[sum1, minimum size= 0.7cm] (sum1) at (0,-0.5) {$\bm{+}$};
            \node[prod] (prod11) at (-4.2,-2) {$\bm{\times_{11}}$}; 
            \node[prod] (prod21) at (-3,-2) {$\bm{\times_{12}}$}; 
            \node[prod] (prod12) at (-1.8,-2) {$\bm{\times_{21}}$}; 
            \node[prod] (prod22) at (-0.6,-2) {$\bm{\times_{22}}$};

            \draw[->] (sum1) -- (prod11) ;
            \draw[->] (sum1) -- (prod21) ;
            \draw[->] (sum1) -- (prod12) ;
            \draw[->] (sum1) -- (prod22);

            \node[] (prod11label) at (-4.2,-2.6) {\scriptsize $5$};
            \node[] (prod12label) at (-3.0,-2.6) {\scriptsize $2 + i$};
            \node[] (prod21label) at (-1.8,-2.6) {\scriptsize $2-i$};
            \node[] (prod22label) at (-0.6,-2.6) {\scriptsize $1$};

            \node[prod] (prod33) at (0.6,-2) {$\bm{\times_{33}}$}; 
            \node[prod] (prod43) at (1.8,-2) {$\bm{\times_{34}}$}; 
            \node[prod] (prod34) at (3.0,-2) {$\bm{\times_{43}}$}; 
            \node[prod] (prod44) at (4.2,-2) {$\bm{\times_{44}}$};

            \draw[->] (sum1) -- (prod33) ;
            \draw[->] (sum1) -- (prod34) ;
            \draw[->] (sum1) -- (prod43) ;
            \draw[->] (sum1) -- (prod44);

            \node[] (prod33label) at (0.6,-2.6) {\scriptsize $9$};
            \node[] (prod34label) at (1.8,-2.6) {\scriptsize $9-3i$};
            \node[] (prod43label) at (3.0,-2.6) {\scriptsize $9+3i$};
            \node[] (prod44label) at (4.2,-2.6) {\scriptsize $10$};

        \end{scope}
    \end{tikzpicture}
    \caption{Materialized Inception PC $\circuit_{\text{mat}}$}
\end{subfigure}
    \caption{Diagrams showing an Inception PC, and the corresponding materialized \modelshort{}. Each product node is labelled with an index, such that e.g. $\times_{34}$ is the product of the product nodes $\times_3, \times_4$. For clarity, the children of product nodes have been omitted (except the latent indicators), and edge weights for the materialized Inception PC are displayed below the corresponding child.}
    \label{fig:aug_inc}
\end{figure*}

\begin{restatable}[Expressive Efficiency of Inception PCs]{corollary}{corExpressive}
    Inception PCs are strictly more expressive efficient than both (structured-decomposable) monotone and squared PCs.
\end{restatable}

\section{Tensorized Inception PCs} \label{sec:tensorized}

Thus far, we have described a general formulation of Inception PCs that simply requires smoothness and structured decomposability of the circuit. In practice, we will want to design specific circuit architectures for learning. The purpose of this section is to (i) propose such an architecture suitable for learning and inference on GPUs; and (ii) present an alternative view of Inception PCs as performing hierarchical tensor contractions.

We follow recent trends in probabilistic circuit learning \citep{peharz2020einsum,mari2023unifying} and consider tensorized architectures for $\circuit_{\text{\modelname{}}}$, where sum and product nodes are grouped into regions by scope. In particular, for each region (scope), we construct $N_1 \times N_2$ sum and product nodes in the Inception PC, where the sum nodes are connected in a dense fashion to the product nodes via a weight tensor $\weight_{ii'kk'} \in \mathbb{C}^{N_1^2 \times N_2^2}$ with $1 \leq k, k' \leq N_1$, $1 \leq i, i' \leq N_2$; and the product nodes are connected to sum nodes in subsequent regions via a Hadamard product \citep{loconte2024relationship}. Each region is associated with a categorical latent $W$ and $U$; and each product node corresponds explicitly to a value of $W$ and $U$, having two indicators $\llbracket W = i' \rrbracket$ and $\llbracket U = k' \rrbracket$ as children where $1 \leq k' \leq N_1$, $1 \leq i' \leq N_2$. For example, the product nodes in the Inception PC in Figure \ref{fig:incpc_example} correspond to a tensorized Inception PC with $N_1 = 2, N_2 = 2$. 

Once materialized using Theorem \ref{thm:tract_model}, we obtain a materialized \modelshort{} $\circuit_{\text{mat}}$ with $N_1 \times N_2^2$ sum and product nodes; the increase in size occurs from the ``expansion'' from two-norm latents $\bm{W}$ as seen in Figure \ref{fig:sum_before_square}. For a given region, let us write $n^{(S)}_{ijk}$ for the sum nodes for $1 \leq k \leq N_1, 1 \leq i, j \leq N_2$, and $n^{(P)}_{ijk}$ for the product nodes for $1 \leq k \leq N_1, 1 \leq i, j \leq N_2$ in $\circuit_{\text{mat}}$. The product nodes are still connected to their $M$ child sum regions $\node^{(S_1)}, ..., \node^{(S_M)}$ via a Hadamard product:
\begin{equation}
    \pcfunc_{\node^{(P)}_{ijk}} = \prod_{m=1}^{M} \pcfunc_{\node^{(S_m)}_{ijk}}
\end{equation}
However, the sum-to-product connection is no longer dense. In particular, writing we have the following relationship between the sum and product node tensors:
\begin{equation} \label{eqn:tensor_contract}
    \pcfunc_{\node^{(S)}_{ijk}} = \sum_{i'j'k'} \weight_{ii'kk'} \overline{\weight_{jj'kk'}} \pcfunc_{\node^{(P)}_{i'j'k'}}
\end{equation}

We illustrate the structure of this sum region in Figure \ref{fig:tensorPC}. On a high level, the connections between the groups of nodes in Figure \ref{fig:tensorPC} can be viewed as a standard, monotone PC; in particular, the value of each sum group is simply an weighted sum of its children. However, each weighted edge is between 2D groups of ``squared'' nodes, rather than between scalar nodes.

\begin{figure}[t]
    \centering
    \tikzstyle{sumx}=[inner sep=1pt]
\tikzstyle{prodx}=[inner sep=1pt]
\tikzstyle{input}=[inner sep=1pt,minimum size=12pt]

\begin{tikzpicture}[semithick]
        \begin{scope}
            \node[sumx] (s1) at (0,0) {$\bm{+}$};
            \node[sumx] (s2) at (0.5,-0) {$\bm{+}$};
            \node[sumx] (s3) at (0, 0.5) {$\bm{+}$};
            \node[sumx] (s4) at (0.5,0.5) {$\bm{+}$};

            \draw[rounded corners, color=cyan, very thick] (-0.4,-0.4) rectangle (0.9,0.9) {};

            \node[sumx] (s5) at (0+3,0) {$\bm{+}$};
            \node[sumx] (s6) at (0.5+3,-0) {$\bm{+}$};
            \node[sumx] (s7) at (0+3, 0.5) {$\bm{+}$};
            \node[sumx] (s8) at (0.5+3,0.5) {$\bm{+}$};

            \draw[rounded corners, color=cyan, very thick] (-0.4+3,-0.4) rectangle (0.9+3,0.9) {};

            \node[sumx] (s9) at (0+6,0) {$\bm{+}$};
            \node[sumx] (s10) at (0.5+6,-0) {$\bm{+}$};
            \node[sumx] (s11) at (0+6, 0.5) {$\bm{+}$};
            \node[sumx] (s12) at (0.5+6,0.5) {$\bm{+}$};

            \draw[rounded corners, color=cyan, very thick] (-0.4+6,-0.4) rectangle (0.9+6,0.9) {};

            \node[prodx] (p1) at (0,0-3) {$\bm{\times}$};
            \node[prodx] (p2) at (0.5,-0-3) {$\bm{\times}$};
            \node[prodx] (p3) at (0, 0.5-3) {$\bm{\times}$};
            \node[prodx] (p4) at (0.5,0.5-3) {$\bm{\times}$};

            \draw[rounded corners, color=orange, very thick] (-0.4,-0.4-3) rectangle (0.9,0.9-3) {};

            \node[prodx] (p5) at (0+3,0-3) {$\bm{\times}$};
            \node[prodx] (p6) at (0.5+3,-0-3) {$\bm{\times}$};
            \node[prodx] (p7) at (0+3, 0.5-3) {$\bm{\times}$};
            \node[prodx] (p8) at (0.5+3,0.5-3) {$\bm{\times}$};

            \draw[rounded corners, color=orange, very thick] (-0.4+3,-0.4-3) rectangle (0.9+3,0.9-3) {};

            \node[prodx] (p9) at (0+6,0-3) {$\bm{\times}$};
            \node[prodx] (p10) at (0.5+6,-0-3) {$\bm{\times}$};
            \node[prodx] (p11) at (0+6, 0.5-3) {$\bm{\times}$};
            \node[prodx] (p12) at (0.5+6,0.5-3) {$\bm{\times}$};

            \draw[rounded corners, color=orange, very thick] (-0.4+6,-0.4-3) rectangle (0.9+6,0.9-3) {};

            \draw[thick] (0.25,-0.4) -- (0.25, -2.1);
            \draw[thick] (0.25,-0.4) -- (3.25, -2.1);
            \draw[thick] (0.25,-0.4) -- (6.25, -2.1);
            \draw[thick] (3.25,-0.4) -- (0.25, -2.1);
            \draw[thick] (3.25,-0.4) -- (3.25, -2.1);
            \draw[thick] (3.25,-0.4) -- (6.25, -2.1);
            \draw[thick] (6.25,-0.4) -- (0.25, -2.1);
            \draw[thick] (6.25,-0.4) -- (3.25, -2.1);
            \draw[thick] (6.25,-0.4) -- (6.25, -2.1);

            \node[] (i1) at (0.2, 1.2) {\scriptsize $i=1, 2$};
            \node[ rotate=90] (j) at (-0.7, 0.3) {\scriptsize $j = 1, 2$};
            \node[] (k1) at (0.2, -3.7) {\scriptsize $k = 1$};
            \node[] (k2) at (3.2, -3.7) {\scriptsize $k = 2$};
            \node[] (k3) at (6.2, -3.7) {\scriptsize $k = 3$};
          \end{scope}
    \end{tikzpicture}
    \caption{Illustration of tensorized Inception PC sum region, where $N_1 = 3$, $N_2 = 2$.}
    \label{fig:tensorPC}
\end{figure}

The complexity of a forward pass can be deduced from the tensor contraction in Equation \ref{eqn:tensor_contract}. In Table \ref{tbl:complexity}, we compare the parameter counts (i.e., memory cost) and time complexity of a forward pass for each region, between monotone, squared and Inception PCs; where $B$ is the size of the data batch. It can be seen that Inception PCs have the same number of parameters/complexity as monotone and squared PCs when setting $N_2 = 1$ and $N_1 = 1$ respectively, except with the complexity differing for squared PCs: this is because one can perform the squaring only once-per-batch in this special case \cite{LoconteICLR24}.

As with previous works \citep{peharz2020einsum}, we organize the circuit into layers (sets of regions that can be computed simultaneously) to further take advantage of GPU parallelization. For training, we use gradient descent on the negative log-likelihood of the training set. In the case of using complex parameters, we use Wirtinger derivatives \citep{kreutz2009complex} in order to optimize the complex weights and input functions. To achieve numerical stability, we use a variant of the log-sum-exp trick for complex numbers, which we describe in Appendix \ref{apx:logsumexp_complex}.

\begin{table}[t]
\begin{tabular}{@{}llll@{}}
\toprule
           & \textbf{Monotone} & \textbf{Squared} & \textbf{Inception} \\ \midrule
Complexity & $BN_1^2$           & $BN_2^2 + N_2^3$ & $BN_1^2N_2^3$      \\
Parameter Count & $N_1^2$           & $N_2^2$ & $N_1^2N_2^2$      \\\bottomrule
\end{tabular}
\caption{Complexity of batched forward pass and parameter count for circuit variants, per region.}
\label{tbl:complexity}
\end{table}

\begin{table*}[t]
\centering
\scalebox{0.88}{
    
\begin{tabular}{@{}cccccccc|cc@{}}
\toprule
\multicolumn{1}{c}{\textbf{Dataset}} & \multicolumn{9}{c}{\textbf{Model}}                                                                         \\ \cmidrule(l){2-10} 
                                     & Monotone PC & \multicolumn{3}{c}{Squared PC} & \multicolumn{3}{c}{\modelname{} PC}  & RAT-SPN & VAE                          \\ 
                                     &            & Non-negative    & Real  & Complex      & Non-negative & Real & Complex \\ \midrule
nltcs                                &   6.04        &       6.02       &  6.09  &     6.03   &               \textbf{6.00}         &    6.01   &               6.02  & 6.01 &   5.99          \\
msnbc                      &       6.25     &       6.23        & 6.10  &    6.07       &       6.05              &     6.06    &              \textbf{6.04}     & 6.04  &     6.09       \\
kdd-2k                     &    2.13        &     \textbf{2.10}       &  2.23  &   2.12        &   2.12             &     2.13       &         2.12      & 2.13 &  2.12         \\
plants                        &     13.70       &        13.38        & 15.06  &    13.13       &              12.81       &    13.06    &   \textbf{ 12.76 }                   & 13.44  &   12.34     \\ 
jester                        &    52.77       &        52.56        & 54.51   &   52.97       &                     \textbf{52.51}        &   52.60  & 52.75                  & 52.97 &    51.54       \\ 
audio                      &     40.27        &        40.08   &  41.49 &       40.01      &                    \textbf{39.88}    &  39.91  &    40.05                &  39.96  &   38.67        \\ 
netflix                       &     57.09        &        56.85   &    57.68 &       56.70       &                     \textbf{56.52}    &  56.57    &   56.74      & 56.85 &   54.73               \\ 
accidents                       &     29.57        &        27.93    & 28.15   &       27.05       &                     26.70    & 27.30   &    \textbf{26.61} & 35.49  & 29.11  \\
retail                       &     10.99        &        10.82   & 11.00   &       \textbf{10.95}       &                    11.00      & \textbf{10.95}  &   \textbf{10.95}  & 10.91&  10.83 \\ 
pumsb-star                       &     27.98   & 24.95     &        25.69   &  23.98  &       23.69       &                     24.85          &   \textbf{23.03} & 32.53 & 25.16    \\
dna                       &     80.21       &        79.95     & 80.15 &       80.17       &             79.85         &  80.11 & \textbf{79.77} & 97.23 &   94.56 \\
kosarek                      &    10.77        &      \textbf{10.54}       & 12.03 &   10.59          &        10.69        &  10.83& 10.60  & 10.89 &  10.64 \\
msweb                       &     10.44        &        \textbf{9.92}      & 10.58 &       10.17       &             10.84        &  10.34 & 10.10 & 10.12 &  9.727  \\
book                       &     33.70        &        \textbf{33.32}     & 37.02 &       33.95      &             33.51         &  34.18 & 33.67   & 34.68 & 33.19 \\
   eachmovie                   &     52.83       &        51.28    & 62.03 &       52.33     &            \textbf{50.76}        &  51.22 & 51.41 & 53.63 &  47.43  \\
web-kb                      &     155.34        &        151.84     & 162.03 &       155.00      &             \textbf{151.74}        &  153.32 & 153.98   & 157.53 & 146.9 \\
reuters-52                     &     95.22        &        92.63     & 96.25 &       93.90      &             93.17      &  \textbf{89.67} & 93.80 & 87.37 &  81.33  \\
20ng                     &     155.05       &       \textbf{ 152.98 }    & 164.19 &       154.79     &             154.15        &  155.47 & 155.18 &152.06 & 146.9   \\
bbc                      &     253.98       &        \textbf{250.88}    & 259.04 &      255.13        &           251.28      &  253.28 & 253.37 & 252.14&   240.94 \\
ad                      &     16.93     &    15.54 &    16.32    & 15.35 &       16.02      &            15.81      &  \textbf{15.32} & 48.47 & 18.81  \\
\bottomrule
\end{tabular}
    }
    \caption{Test negative log-likelihoods on 20 binary datasets. Lower is better.}
    \label{tbl:debd}
\end{table*}

\section{Related Work}

The space of probabilistic models can be usefully characterized through the lens of \emph{tractability} and \emph{expressive efficiency} \citep{ProbCirc20}. Tractability forms a spectrum, extending from intractable models such as diffusion models \citep{ho2020denoising} and variational autoencoders \citep{Kingma2014}, to models admitting tractable likelihoods such as normalizing flows \citep{papamakarios2021normalizing}, to models admitting tractable marginals and more complex queries \citep{vergari2021compositional,WangNeurIPS24} such as probabilistic circuits. Probabilistic circuits, however, are generally less expressive efficient in practice than less tractable models due to the structure that is imposed in the copmutation graph. Thus much effort has been expended on improving the expressive efficiency of probabilistic circuits whilst maintaining tractability \citep{sidheekh2024building}.

Our work builds upon a long line of work in the circuit literature examining the effect of relaxing the monotonicity condition in circuits. Theoretically, it is known that allowing negative parameters in arithmetic circuits can result in exponential gains in succinctness \citep{valiant1979negation}, though this not true of all circuit subclasses \citep{decolnet2021succinctness}.
Practically, recent works have aimed to exploit negative parameters in probabilistic modeling \citep{zhang2020relationship,zhang2021probabilistic,sladek2023encoding,LoconteICLR24}. \citet{yu2023characteristic} design circuits to represent the characteristic function of a distribution, which can take complex values (in particular, using complex leaves).
Concurrently with our work, \citet{loconte2024sumsquarescircuits} proposed to employ a sum of squared circuits to overcome similar limitations to those we observe in this paper. 

There also exist a range of other probabilistic models which employ negative parameterizations and/or squaring. Tensor networks, such as the popular matrix-product states (MPS) \citep{perez2007matrix}, compactly encode functions through sparse tensor contractions. They are often used to model quantum states, whereby the probability of observations is given by squaring via the Born rule \citep{dirac1981principles}; recently, tensor networks have also been used for probabilistic modeling \citep{cheng2019tree,glasser2019expressive}. Positive semidefinite models \citep{rudi2021psd} in the kernel methods literature utilize a shallow sum of squares to define unnormalized distributions. \citet{tsuchida2023squared,tsuchida2024exact} recently introduced squared neural families, which employ the squared 2-norm of a neural network in the density function and strictly generalize exponential familiies. We discuss technical connections with these models in Appendix \ref{apx:other_models}.%

\section{Experiments} \label{sec:experiments}

In our experiments, we aim to answer the following research questions: %
(1) do Inception PCs improve upon the expressivity and modeling capabilities of monotone and squared PCs?; (2) what is the tradeoff between the number of one-norm latents and two-norm latents in terms of modeling performance and computational cost?; (3) how do complex parametererizations of Inception PCs compare to non-negative and real parameterizations?

We use hidden Chow-Liu trees (HCLT) in all experiments as the PC vtree \cite{liu2021tractable}, as it satisfies the required structured-decomposability property, and has been shown to provide state-of-the-art likelihoods for PCs. Further experimental details can be found in Appendix \ref{apx:experiment_details}.

\subsection{Binary Datasets}

We begin by evaluating on the 20 binary datasets, which has been extensively used as a density estimation benchmark \citep{rooshenas2014learning,peharz2020ratspn}. We aim to investigate across this range of datasets how (1) non-negative, real, and complex parameterizations differ; and (2) whether Inception PCs can effectively make use of both types of latents, by comparing to its monotone and squared PC counterparts. In particular, we set $(N_1, N_2) = (8, 1)$ for monotone PCs, $(N_1, N_2) = (1, 8)$ for squared PCs, and $(N_1, N_2) = (8, 8)$ for Inception PCs. We train each PC on negative log-likelihood for 100 epochs, using the Adam optimizer \citep{kingma2015adam} with learning rate $0.01$. We average over 5 runs for each configuration.

The results are shown in Table \ref{tbl:debd}. It can be seen that Inception PCs achieve the best likelihoods on 14 of the 20 datasets, confirming that they provide more modeling capacity compared to squaring alone. They also perform very favorably compared to existing models, including (tractable) RAT-SPNs \citep{peharz2020ratspn} and (intractable) importance weighed autoencoders \citep{burda2016importance}. Squared PCs, even with non-negative parameters, outperform their monotone counterparts that have the same number of parameters, as also observed in Figure 3 of \citet{LoconteICLR24}. Very interestingly, despite the fact that complex parameterizations of edge weights strictly generalizes real parameterizations and non-negative parameterizations, there is no clear trend in performance, with the non-negative parameterization often achieving comparable or better likelihoods than real or complex parameterizations. We observed that this is (at least partially) due to the complex and negative parameterizations being more prone to overfitting (cf. training LLs in Appendix \ref{apx:experiment_train}).

\subsection{Scaling to Large Image Datasets}

In this section, we conduct experiments on large-scale image datasets, 
in particular, downscaled versions of ImageNet to $32 \times 32$ and $64 \times 64$ \cite{deng2009imagenet}. Following recent work on PC modeling for these datasets \cite{LiuICLR23,liu2023understanding,liu2024scaling}, we transform the data from RGB using the lossy YCoCg transform. Note that likelihoods on on YCoCg transformed data are thus not comparable to likelihoods on the original RGB dataset. Additionally, to improve training efficiency, we train the circuits on 16 $\times$ 16 patches of the original image\footnote{The entire image is then modelled using the same PC for each of the patches; the performance could potentially be improved further by modelling correlation using a  PC over patch-level PCs \citep{liu2023understanding}.}, such that the PC models $16 \times 16 \times 3 = 768$ variables each with $256$ categories. We train the models on negative log-likelihood, using the Adam optimizer with learning rate $0.01$. We evaluate models using test-set bits-per-dimension (bpd).

We begin by examining the tradeoff between the two types of latents. To this end, in Figure \ref{fig:imagenet32_configs} we plot the test bpd for a range of configurations of $(N_1, N_2)$ on ImageNet32. These configurations were chosen based on a search in the range $N_1, N_2 \in \{1, 2, 4, 8, 16, 32, 64\}$ according to a maximum budget of $2^{18} = 262144$ floating-point operations per second (FLOPS) per region (c.f. Table \ref{tbl:complexity}). It can be seen that the optimal configuration ($N_1=32,N_2=4$ with bpd $4.19$) given these constraints uses a combination of 1-norm latents and 2-norm latents, as opposed to a pure monotone ($N_2=1$) or pure squared circuit ($N_1=1$).

In Table \ref{tbl:image_results}, we %
summarize results from the ImageNet32 and ImageNet64 datasets, where MPC refers to a monotonic PC $(N_1, N_2) = (64, 1)$, SQC a squared PC $(N_1, N_2) = (1, 64)$, and IncPC a tensorized Inception PC with the optimal configuration of $(N_1, N_2)$ under the computation time constraint. 
 For comparison, we also show results for monotone HCLT PCs trained using expectation-maximization (EM), and using latent variable distillation (LVD) in which guidance for the PC latent space is provided by distilling information from existing deep generative models \citep{LiuICLR23}. The results show that it is possible to effectively train tractable PC models from initialization with Adam with bpds competitive with the state-of-the-art.

\begin{figure}[t]
    \centering
    \includegraphics[width=\linewidth]{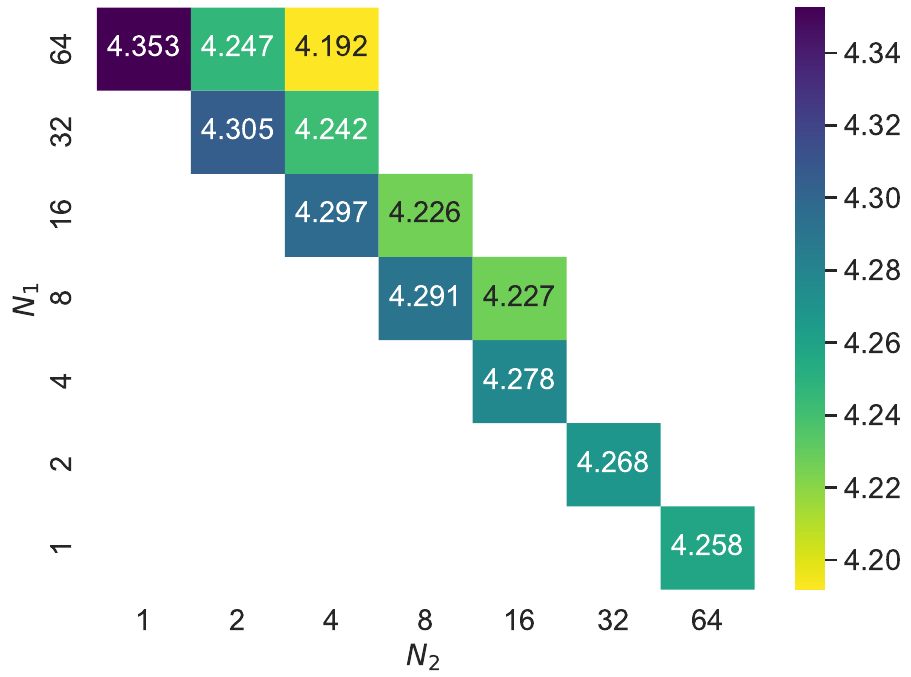}
    \caption{Test bpd for a range of configurations of $(N_1, N_2)$ on ImageNet32 (lower is better); configurations are limited to $2^{18} = 262K$ FLOPS per region. The Inception PC with $N_1=32, N_2=4$ achieves the best performance.}
    \label{fig:imagenet32_configs}
\end{figure}

\begin{table}[t]
\centering
\begin{tabular}{@{}llll|ll@{}}
\toprule
           & MPC & SPC   & IncPC   & EM   & LVD  \\ \midrule
ImageNet32 &  4.35   & 4.26 & \textbf{4.19} & 4.82 & 4.38 \\
ImageNet64 &   4.12 & 3.98  & \textbf{3.90}  & 4.67 & 4.12 \\ \bottomrule
\end{tabular}
\caption{Test bpd on large-scale image datasets (lower is better). All PCs use the HCLT vtree. }
\label{tbl:image_results}
\end{table}

\section{Conclusion}

To conclude, we have shown that two important classes of tractable probabilistic models, namely monotone and squared real structured-decomposable PCs are incomparable in terms of expressive efficiency in general. Thus, we propose a new class of probabilistic circuits, Inception PCs, based on \emph{deep sums-of-squares-of-sums} that generalizes these approaches, and can employ complex parameters. We further show empirically that Inception PCs can offer better performance on a range of tabular and image datasets. %
Promising avenues to investigate in future work would be improving the optimization of complex parameters in \modelname{} PCs; as well as reducing the computational cost of training by designing more efficient architectures.

\section*{Acknowledgements}
BW and GVdB gratefully acknowledge support from the National Artificial Intelligence Research Resource Pilot under project NAIRR240143. This work was funded in part by the DARPA ANSR
program under award FA8750-23-2-0004, the DARPA
PTG Program under award HR00112220005, and NSF
grant \#IIS-1943641.

\bibliography{references}

\begin{thebibliography}{57}
\providecommand{\natexlab}[1]{#1}

\bibitem[{Ahmed, Chang, and Van~den Broeck(2023)}]{AhmedNeurIPS23}
Ahmed, K.; Chang, K.-W.; and Van~den Broeck, G. 2023.
\newblock A Pseudo-Semantic Loss for Deep Generative Models with Logical
  Constraints.
\newblock In \emph{Advances in Neural Information Processing Systems 36
  (NeurIPS)}.

\bibitem[{Ahmed et~al.(2022)Ahmed, Teso, Chang, Van~den Broeck, and
  Vergari}]{AhmedNeurIPS22}
Ahmed, K.; Teso, S.; Chang, K.-W.; Van~den Broeck, G.; and Vergari, A. 2022.
\newblock Semantic Probabilistic Layers for Neuro-Symbolic Learning.
\newblock In \emph{Advances in Neural Information Processing Systems 35
  (NeurIPS)}.

\bibitem[{Broadrick, Zhang, and Van~den Broeck(2024)}]{broadrick2024polynomial}
Broadrick, O.; Zhang, H.; and Van~den Broeck, G. 2024.
\newblock Polynomial Semantics of Tractable Probabilistic Circuits.
\newblock In \emph{Conference on Uncertainty in Artificial Intelligence}. PMLR.

\bibitem[{Burda, Grosse, and Salakhutdinov(2016)}]{burda2016importance}
Burda, Y.; Grosse, R.; and Salakhutdinov, R. 2016.
\newblock Importance Weighted Autoencoders.
\newblock In \emph{International Conference on Learning Representations
  (ICLR)}.

\bibitem[{Chan and Darwiche(2006)}]{chan2006robustness}
Chan, H.; and Darwiche, A. 2006.
\newblock On the robustness of most probable explanations.
\newblock In \emph{Proceedings of the Twenty-Second Conference on Uncertainty
  in Artificial Intelligence}, 63--71.

\bibitem[{Cheng et~al.(2019)Cheng, Wang, Xiang, and Zhang}]{cheng2019tree}
Cheng, S.; Wang, L.; Xiang, T.; and Zhang, P. 2019.
\newblock Tree tensor networks for generative modeling.
\newblock \emph{Physical Review B}, 99(15): 155131.

\bibitem[{Choi, Kisa, and Darwiche(2013)}]{choi2013compiling}
Choi, A.; Kisa, D.; and Darwiche, A. 2013.
\newblock Compiling probabilistic graphical models using sentential decision
  diagrams.
\newblock In \emph{Symbolic and Quantitative Approaches to Reasoning with
  Uncertainty: 12th European Conference, ECSQARU 2013, Utrecht, The
  Netherlands, July 8-10, 2013. Proceedings 12}, 121--132. Springer.

\bibitem[{Choi, Vergari, and Van~den Broeck(2020)}]{ProbCirc20}
Choi, Y.; Vergari, A.; and Van~den Broeck, G. 2020.
\newblock Probabilistic Circuits: A Unifying Framework for Tractable
  Probabilistic Models.

\bibitem[{Darwiche(2003)}]{darwiche2003differential}
Darwiche, A. 2003.
\newblock A differential approach to inference in Bayesian networks.
\newblock \emph{Journal of the ACM (JACM)}, 50(3): 280--305.

\bibitem[{de~Colnet and Mengel(2021)}]{decolnet2021succinctness}
de~Colnet, A.; and Mengel, S. 2021.
\newblock A compilation of succinctness results for arithmetic circuits.
\newblock In \emph{18th International Conference on Principles of Knowledge
  Representation and Reasoning (KR)}.

\bibitem[{Deng et~al.(2009)Deng, Dong, Socher, Li, Li, and
  Fei-Fei}]{deng2009imagenet}
Deng, J.; Dong, W.; Socher, R.; Li, L.-J.; Li, K.; and Fei-Fei, L. 2009.
\newblock Imagenet: A large-scale hierarchical image database.
\newblock In \emph{2009 IEEE conference on computer vision and pattern
  recognition}, 248--255. Ieee.

\bibitem[{Dirac(1981)}]{dirac1981principles}
Dirac, P. A.~M. 1981.
\newblock \emph{The principles of quantum mechanics}.
\newblock 27. Oxford university press.

\bibitem[{Fawzi et~al.(2015)Fawzi, Gouveia, Parrilo, Robinson, and
  Thomas}]{fawzi2015positive}
Fawzi, H.; Gouveia, J.; Parrilo, P.~A.; Robinson, R.~Z.; and Thomas, R.~R.
  2015.
\newblock Positive semidefinite rank.
\newblock \emph{Mathematical Programming}, 153: 133--177.

\bibitem[{Glasser et~al.(2019)Glasser, Sweke, Pancotti, Eisert, and
  Cirac}]{glasser2019expressive}
Glasser, I.; Sweke, R.; Pancotti, N.; Eisert, J.; and Cirac, I. 2019.
\newblock Expressive power of tensor-network factorizations for probabilistic
  modeling.
\newblock \emph{Advances in neural information processing systems}, 32.

\bibitem[{Han et~al.(2018)Han, Wang, Fan, Wang, and
  Zhang}]{han2018unsupervised}
Han, Z.-Y.; Wang, J.; Fan, H.; Wang, L.; and Zhang, P. 2018.
\newblock Unsupervised generative modeling using matrix product states.
\newblock \emph{Physical Review X}, 8(3): 031012.

\bibitem[{Ho, Jain, and Abbeel(2020)}]{ho2020denoising}
Ho, J.; Jain, A.; and Abbeel, P. 2020.
\newblock Denoising diffusion probabilistic models.
\newblock \emph{Advances in neural information processing systems}, 33:
  6840--6851.

\bibitem[{Kingma and Ba(2015)}]{kingma2015adam}
Kingma, D.~P.; and Ba, J. 2015.
\newblock Adam: {A} Method for Stochastic Optimization.
\newblock In Bengio, Y.; and LeCun, Y., eds., \emph{3rd International
  Conference on Learning Representations, {ICLR} 2015, San Diego, CA, USA, May
  7-9, 2015, Conference Track Proceedings}.

\bibitem[{Kingma and Welling(2014)}]{Kingma2014}
Kingma, D.~P.; and Welling, M. 2014.
\newblock {Auto-Encoding Variational Bayes}.
\newblock In \emph{2nd International Conference on Learning Representations,
  {ICLR} 2014, Banff, AB, Canada, April 14-16, 2014, Conference Track
  Proceedings}.

\bibitem[{Kreutz-Delgado(2009)}]{kreutz2009complex}
Kreutz-Delgado, K. 2009.
\newblock The complex gradient operator and the CR-calculus.
\newblock \emph{arXiv preprint arXiv:0906.4835}.

\bibitem[{Liu, Ahmed, and Van~den Broeck(2024)}]{liu2024scaling}
Liu, A.; Ahmed, K.; and Van~den Broeck, G. 2024.
\newblock Scaling Tractable Probabilistic Circuits: A Systems Perspective.
\newblock In \emph{Forty-first International Conference on Machine Learning}.

\bibitem[{Liu, Niepert, and Van~den Broeck(2024)}]{LiuICLR24}
Liu, A.; Niepert, M.; and Van~den Broeck, G. 2024.
\newblock Image Inpainting via Tractable Steering of Diffusion Models.
\newblock In \emph{Proceedings of the Twelfth International Conference on
  Learning Representations (ICLR)}.

\bibitem[{Liu and Van~den Broeck(2021)}]{liu2021tractable}
Liu, A.; and Van~den Broeck, G. 2021.
\newblock Tractable regularization of probabilistic circuits.
\newblock \emph{Advances in Neural Information Processing Systems}, 34:
  3558--3570.

\bibitem[{Liu, Zhang, and Van~den Broeck(2023)}]{LiuICLR23}
Liu, A.; Zhang, H.; and Van~den Broeck, G. 2023.
\newblock Scaling Up Probabilistic Circuits by Latent Variable Distillation.
\newblock In \emph{Proceedings of the International Conference on Learning
  Representations (ICLR)}.

\bibitem[{Liu et~al.(2023)Liu, Liu, Van~den Broeck, and
  Liang}]{liu2023understanding}
Liu, X.; Liu, A.; Van~den Broeck, G.; and Liang, Y. 2023.
\newblock Understanding the distillation process from deep generative models to
  tractable probabilistic circuits.
\newblock In \emph{International Conference on Machine Learning}, 21825--21838.
  PMLR.

\bibitem[{Loconte et~al.(2024{\natexlab{a}})Loconte, Mari, Gala, Peharz,
  de~Campos, Quaeghebeur, Vessio, and Vergari}]{loconte2024relationship}
Loconte, L.; Mari, A.; Gala, G.; Peharz, R.; de~Campos, C.; Quaeghebeur, E.;
  Vessio, G.; and Vergari, A. 2024{\natexlab{a}}.
\newblock What is the Relationship between Tensor Factorizations and Circuits
  (and How Can We Exploit it)?
\newblock \emph{arXiv preprint arXiv:2409.07953}.

\bibitem[{Loconte, Mengel, and Vergari(2025)}]{loconte2024sumsquarescircuits}
Loconte, L.; Mengel, S.; and Vergari, A. 2025.
\newblock Sum of Squares Circuits.
\newblock In \emph{The 39th Annual AAAI Conference on Artificial Intelligence}.

\bibitem[{Loconte et~al.(2024{\natexlab{b}})Loconte, Sladek, Mengel, Trapp,
  Solin, Gillis, and Vergari}]{LoconteICLR24}
Loconte, L.; Sladek, A.~M.; Mengel, S.; Trapp, M.; Solin, A.; Gillis, N.; and
  Vergari, A. 2024{\natexlab{b}}.
\newblock Subtractive Mixture Models via Squaring: Representation and Learning.
\newblock In \emph{Proceedings of the Twelfth International Conference on
  Learning Representations (ICLR)}.

\bibitem[{Mari, Vessio, and Vergari(2023)}]{mari2023unifying}
Mari, A.; Vessio, G.; and Vergari, A. 2023.
\newblock Unifying and Understanding Overparameterized Circuit Representations
  via Low-Rank Tensor Decompositions.
\newblock In \emph{The 6th Workshop on Tractable Probabilistic Modeling}.

\bibitem[{Marteau-Ferey, Bach, and Rudi(2020)}]{marteau2020non}
Marteau-Ferey, U.; Bach, F.; and Rudi, A. 2020.
\newblock Non-parametric models for non-negative functions.
\newblock \emph{Advances in neural information processing systems}, 33:
  12816--12826.

\bibitem[{Martens and Medabalimi(2014)}]{martens2014expressive}
Martens, J.; and Medabalimi, V. 2014.
\newblock On the expressive efficiency of sum product networks.
\newblock \emph{arXiv preprint arXiv:1411.7717}.

\bibitem[{Papamakarios et~al.(2021)Papamakarios, Nalisnick, Rezende, Mohamed,
  and Lakshminarayanan}]{papamakarios2021normalizing}
Papamakarios, G.; Nalisnick, E.; Rezende, D.~J.; Mohamed, S.; and
  Lakshminarayanan, B. 2021.
\newblock Normalizing flows for probabilistic modeling and inference.
\newblock \emph{Journal of Machine Learning Research}, 22(57): 1--64.

\bibitem[{Peharz et~al.(2016)Peharz, Gens, Pernkopf, and
  Domingos}]{peharz2016latent}
Peharz, R.; Gens, R.; Pernkopf, F.; and Domingos, P. 2016.
\newblock On the latent variable interpretation in sum-product networks.
\newblock \emph{IEEE transactions on pattern analysis and machine
  intelligence}, 39(10): 2030--2044.

\bibitem[{Peharz et~al.(2020{\natexlab{a}})Peharz, Lang, Vergari, Stelzner,
  Molina, Trapp, Van~den Broeck, Kersting, and Ghahramani}]{peharz2020einsum}
Peharz, R.; Lang, S.; Vergari, A.; Stelzner, K.; Molina, A.; Trapp, M.; Van~den
  Broeck, G.; Kersting, K.; and Ghahramani, Z. 2020{\natexlab{a}}.
\newblock Einsum networks: Fast and scalable learning of tractable
  probabilistic circuits.
\newblock In \emph{International Conference on Machine Learning}, 7563--7574.
  PMLR.

\bibitem[{Peharz et~al.(2020{\natexlab{b}})Peharz, Vergari, Stelzner, Molina,
  Shao, Trapp, Kersting, and Ghahramani}]{peharz2020ratspn}
Peharz, R.; Vergari, A.; Stelzner, K.; Molina, A.; Shao, X.; Trapp, M.;
  Kersting, K.; and Ghahramani, Z. 2020{\natexlab{b}}.
\newblock Random sum-product networks: A simple and effective approach to
  probabilistic deep learning.
\newblock In \emph{Uncertainty in Artificial Intelligence}, 334--344. PMLR.

\bibitem[{Perez-Garcia et~al.(2007)Perez-Garcia, Verstraete, Wolf, and
  Cirac}]{perez2007matrix}
Perez-Garcia, D.; Verstraete, F.; Wolf, M.; and Cirac, J. 2007.
\newblock Matrix product state representations.
\newblock \emph{Quantum Information \& Computation}, 7(5): 401--430.

\bibitem[{Pipatsrisawat and Darwiche(2008)}]{pipatsrisawat08compilation}
Pipatsrisawat, K.; and Darwiche, A. 2008.
\newblock New Compilation Languages Based on Structured Decomposability.
\newblock In \emph{Proceedings of the Twenty-Third AAAI Conference on
  Artificial Intelligence (AAAI)}, 517--522.

\bibitem[{Poon and Domingos(2011)}]{poon2011sum}
Poon, H.; and Domingos, P. 2011.
\newblock Sum-product networks: A new deep architecture.
\newblock In \emph{2011 IEEE International Conference on Computer Vision
  Workshops (ICCV Workshops)}, 689--690. IEEE.

\bibitem[{Rahman, Kothalkar, and Gogate(2014)}]{rahman2014cutset}
Rahman, T.; Kothalkar, P.; and Gogate, V. 2014.
\newblock Cutset networks: A simple, tractable, and scalable approach for
  improving the accuracy of chow-liu trees.
\newblock In \emph{Machine Learning and Knowledge Discovery in Databases:
  European Conference, ECML PKDD 2014, Nancy, France, September 15-19, 2014.
  Proceedings, Part II 14}, 630--645. Springer.

\bibitem[{Rooshenas and Lowd(2014)}]{rooshenas2014learning}
Rooshenas, A.; and Lowd, D. 2014.
\newblock Learning sum-product networks with direct and indirect variable
  interactions.
\newblock In \emph{International Conference on Machine Learning}, 710--718.
  PMLR.

\bibitem[{Rosser and Schoenfeld(1962)}]{rosser1962approximate}
Rosser, J.~B.; and Schoenfeld, L. 1962.
\newblock Approximate formulas for some functions of prime numbers.
\newblock \emph{Illinois Journal of Mathematics}, 6(1): 64--94.

\bibitem[{Rudi and Ciliberto(2021)}]{rudi2021psd}
Rudi, A.; and Ciliberto, C. 2021.
\newblock PSD representations for effective probability models.
\newblock \emph{Advances in Neural Information Processing Systems}, 34:
  19411--19422.

\bibitem[{Shao et~al.(2022)Shao, Molina, Vergari, Stelzner, Peharz, Liebig, and
  Kersting}]{shao2022conditional}
Shao, X.; Molina, A.; Vergari, A.; Stelzner, K.; Peharz, R.; Liebig, T.; and
  Kersting, K. 2022.
\newblock Conditional sum-product networks: Modular probabilistic circuits via
  gate functions.
\newblock \emph{International Journal of Approximate Reasoning}, 140: 298--313.

\bibitem[{Sidheekh and Natarajan(2024)}]{sidheekh2024building}
Sidheekh, S.; and Natarajan, S. 2024.
\newblock Building expressive and tractable probabilistic generative models: a
  review.
\newblock In \emph{Proceedings of the Thirty-Third International Joint
  Conference on Artificial Intelligence}, 8234--8243.

\bibitem[{Sladek, Trapp, and Solin(2023)}]{sladek2023encoding}
Sladek, A.~M.; Trapp, M.; and Solin, A. 2023.
\newblock Encoding Negative Dependencies in Probabilistic Circuits.
\newblock In \emph{The 6th Workshop on Tractable Probabilistic Modeling}.

\bibitem[{Tsuchida, Ong, and Sejdinovic(2023)}]{tsuchida2023squared}
Tsuchida, R.; Ong, C.~S.; and Sejdinovic, D. 2023.
\newblock Squared Neural Families: A New Class of Tractable Density Models.
\newblock \emph{Advances in Neural Information Processing Systems}, 36.

\bibitem[{Tsuchida, Ong, and Sejdinovic(2024)}]{tsuchida2024exact}
Tsuchida, R.; Ong, C.~S.; and Sejdinovic, D. 2024.
\newblock Exact, Fast and Expressive Poisson Point Processes via Squared Neural
  Families.
\newblock In \emph{Proceedings of the AAAI Conference on Artificial
  Intelligence}, volume~38, 20559--20566.

\bibitem[{Valiant(1979)}]{valiant1979negation}
Valiant, L.~G. 1979.
\newblock Negation can be exponentially powerful.
\newblock In \emph{Proceedings of the eleventh annual ACM symposium on theory
  of computing}, 189--196.

\bibitem[{Vergari et~al.(2021)Vergari, Choi, Liu, Teso, and Van~den
  Broeck}]{vergari2021compositional}
Vergari, A.; Choi, Y.; Liu, A.; Teso, S.; and Van~den Broeck, G. 2021.
\newblock A compositional atlas of tractable circuit operations for
  probabilistic inference.
\newblock \emph{Advances in Neural Information Processing Systems}, 34:
  13189--13201.

\bibitem[{Wang and Kwiatkowska(2023)}]{wang2023compositional}
Wang, B.; and Kwiatkowska, M. 2023.
\newblock Compositional Probabilistic and Causal Inference using Tractable
  Circuit Models.
\newblock In Ruiz, F.; Dy, J.; and van~de Meent, J.-W., eds., \emph{Proceedings
  of The 26th International Conference on Artificial Intelligence and
  Statistics}, volume 206 of \emph{Proceedings of Machine Learning Research},
  9488--9498. PMLR.

\bibitem[{Wang et~al.(2024)Wang, Maua, Van~den Broeck, and
  Choi}]{WangNeurIPS24}
Wang, B.; Maua, D.; Van~den Broeck, G.; and Choi, Y. 2024.
\newblock A Compositional Atlas for Algebraic Circuits.
\newblock In \emph{Advances in Neural Information Processing Systems 37
  (NeurIPS)}.

\bibitem[{Wang, Wicker, and Kwiatkowska(2022)}]{wang2022trust}
Wang, B.; Wicker, M.~R.; and Kwiatkowska, M. 2022.
\newblock Tractable Uncertainty for Structure Learning.
\newblock In Chaudhuri, K.; Jegelka, S.; Song, L.; Szepesvari, C.; Niu, G.; and
  Sabato, S., eds., \emph{Proceedings of the 39th International Conference on
  Machine Learning}, volume 162 of \emph{Proceedings of Machine Learning
  Research}, 23131--23150. PMLR.

\bibitem[{Yu, Trapp, and Kersting(2023)}]{yu2023characteristic}
Yu, Z.; Trapp, M.; and Kersting, K. 2023.
\newblock Characteristic Circuits.
\newblock \emph{Advances in Neural Information Processing Systems}, 36.

\bibitem[{Zhang et~al.(2023)Zhang, Dang, Peng, and Van~den
  Broeck}]{ZhangICML23}
Zhang, H.; Dang, M.; Peng, N.; and Van~den Broeck, G. 2023.
\newblock Tractable Control for Autoregressive Language Generation.
\newblock In \emph{Proceedings of the 40th International Conference on Machine
  Learning (ICML)}.

\bibitem[{Zhang, Holtzen, and Van~den Broeck(2020)}]{zhang2020relationship}
Zhang, H.; Holtzen, S.; and Van~den Broeck, G. 2020.
\newblock On the relationship between probabilistic circuits and determinantal
  point processes.
\newblock In \emph{Conference on Uncertainty in Artificial Intelligence},
  1188--1197. PMLR.

\bibitem[{Zhang, Juba, and Van~den Broeck(2021)}]{zhang2021probabilistic}
Zhang, H.; Juba, B.; and Van~den Broeck, G. 2021.
\newblock Probabilistic generating circuits.
\newblock In \emph{International Conference on Machine Learning}, 12447--12457.
  PMLR.

\bibitem[{Zhang et~al.(2025)Zhang, Wang, Arenas, and Van~den
  Broeck}]{ZhangAISTATS25}
Zhang, H.; Wang, B.; Arenas, M.; and Van~den Broeck, G. 2025.
\newblock Restructuring Tractable Probabilistic Circuits.
\newblock In \emph{Proceedings of the 28th International Conference on
  Artificial Intelligence and Statistics (AISTATS)}.

\bibitem[{Zhao, Poupart, and Gordon(2016)}]{zhao2016unified}
Zhao, H.; Poupart, P.; and Gordon, G.~J. 2016.
\newblock A unified approach for learning the parameters of sum-product
  networks.
\newblock \emph{Advances in neural information processing systems}, 29.

\end{thebibliography}

\maketitle

\newpage
\appendix
\onecolumn
\section{Relationship with Other Models} \label{apx:other_models}

In this section, we explain the relationship of Inception PCs with other tractable model classes (i.e. probabilistic models admitting efficient computation of normalizing constants) employing similar principles. %

\subsection{Probabilistic Circuits} \label{apx:other_models_pc}
\citet{loconte2024sumsquarescircuits} concurrently introduced \emph{sum of compatible squares} (SOCS), which models a distribution as a shallow sum of squared circuits:
\begin{equation} \label{eqn:SOCS}
    p_{\text{SOCS}}(\vars) \propto \sum_{i=0}^{n-1} |f_{i}(\vars)| ^ 2
\end{equation}
where $f_1, ..., f_n$ are probabilistic circuits that are \emph{compatible}, i.e. share the same scope decomposition. Using the latent variable interpretation $f_{i}(\bm{V}) = \sum_{\bm{w}} f_{i}(\bm{V}, \bm{w})$, we can interpret this as a special case of an InceptionPC where there is only one 1-norm latent variable $U_{\vars}$ taking values in $0, \ldots, n-1$, and with augmented PC function given by \begin{equation}
    f_{\circuit_{\text{aug}}}(\vars, u_{\vars}, \bm{w}) = f_{u_{\vars}}(\vars, \bm{w})
\end{equation}

In addition, the authors also proposed taking the product of compatible monotone and squared circuits ($\mu$SOCS), i.e.
\begin{equation} \label{eqn:muSOCS}
    p_{\mu\text{SOCS}}(\vars) \propto f_1(\bm{V}) |f_{2}(\bm{V})|^2
\end{equation}
Once again, we can employ the latent variable interpretation $f_1(\bm{V}) = \sum_{u} f_1(\bm{V}, \bm{u})$ and $f_2(\vars) = \sum_{\bm{w}} f_2(\vars, \bm{w})$, which can be represented using the following augmented PC function:
\begin{align}
    f_{\circuit_{\text{aug}}}(\vars, \bm{u}, \bm{w}) &= \sqrt{f_{1}(\vars, \bm{u})} f_{2}(\vars, \bm{w})
\end{align}
That is $\mu$SOCS circuits correspond to a particular factorization of the augmented PC over $\bm{u}, \bm{w}$, such that the 1-norm and 2-norm latents are enforced to be ``independent''. In other words, both SOCS and $\mu$SOCS can be viewed as instances of Inception PCs.

A key advantage compared to general Inception PCs, however, is that they enable the computation of unnormalized likelihoods without explicitly performing the squaring operation. In particular, the log-RHS of equations \ref{eqn:SOCS} and \ref{eqn:muSOCS} can be computed by computing the $f_i$ (or $f_1, f_2$) on a given sample $\bm{V}=\bm{v}$ and then computing the expression (similar trick to squared PCs). This is not possible for general Inception PCs, because of the dependence between $\bm{U}$ and $\bm{W}$. Thus, to train e.g. $\mu$SOCS one needs to perform the squaring explicitly only once per batch/parameter update to obtain the normalizing constant. This leads to a training time complexity of $O\bigl(N_1^2N_2^3 + B(N_1^2 + N_2^2)\bigr)$ per region, which can be significantly cheaper than the $O(BN_1^2N_2^3)$ for large batch sizes. For inference queries such as probability or marginal computation, the complexity is the same unless the queries can be batched effectively.
The improved time complexity of $\mu$SOCS, however, may come at the cost of expressivity and modeling performance. An interesting open question is thus to determine whether there is a separation between $\mu$SOCS and the general class of all Inception PCs. 

\subsection{Tensor Networks}

In the tensor network literature, it has been proposed both to use non-negative tensor factorizations and the square of complex tensor factorizations \citep{han2018unsupervised} as parameterizations of a tensor with non-negative entries and thus to represent multidimensional discrete probability distributions. While tensor networks are intractable in general, the widely used matrix-product states (MPS) / tensor-train (TT) and tree tensor networks (TTN) admit tractable inference, and can be interpreted as structured-decomposable circuits; in particular, right-linear vtrees for MPS and general vtrees for TTN \cite{LoconteICLR24,loconte2024relationship}.

Particularly relevant in our context are the locally purified states (LPS) introduced by \citet{glasser2019expressive}, which are a generalization of squared MPS. In the original notation, this factorizes a tensor $T_{X_1, \ldots, X_n}$ over variables (indices) $X_1, \ldots, X_n$ as follows:
\begin{equation}
    T_{X_1, ..., X_n} = \sum_{\{\alpha_i, \alpha'_i =1\}}^{r} \sum_{\{\beta_i = 1\}}^{\mu} A_{1, X_1}^{\beta_1, \alpha_1} \overline{A_{1, X_1}^{\beta_1, \alpha_1'}} A_{2, X_2}^{\beta_2, \alpha_1, \alpha_2} \overline{A_{2, X_2}^{\beta_2, \alpha_1', \alpha_2'}} \ldots A_{n, X_n}^{\beta_n, \alpha_n} \overline{A_{n, X_n}^{\beta_n, \alpha_n'}} 
\end{equation}
where $A_1, ..., A_n$ are tensors. 
In contrast, when employing a right-linear vtree, our Inception PCs can be written using the following tensor factorization (matching notation to the extent possible):
\begin{equation}
    T_{X_1, ..., X_n} = \sum_{\{\alpha_i, \alpha'_i =1\}}^{r} \sum_{\{\beta_i = 1\}}^{\mu} A_{1}^{\beta_1, \alpha_1} \overline{A_{1}^{\beta_1, \alpha_1'}} E_{1, X_1}^{\beta_1, \alpha_1} \overline{E_{1, X_1}^{\beta_1, \alpha_1'}} A_{2}^{\beta_1, \beta_2, \alpha_1, \alpha_2} \overline{A_{2}^{\beta_1, \beta_2, \alpha_1', \alpha_2'}} E_{2, X_2}^{\beta_2, \alpha_2} \overline{E_{2, X_2}^{\beta_2, \alpha'_2}} \ldots
\end{equation}
Here, the $A$ tensors correspond to sum node weight matrices, and $E$ tensors correspond to the input categorical functions. Intuitively, one can view the $\alpha_i$ indices as corresponding to the 2-norm latents $\bm{W}$ and $\beta_i$ to the 1-norm latents $\bm{U}$. Thus Inception PCs share similarities with LPS in extending beyond pure monotone or squared tensor factorizations. The key differences with LPS are that (i) $X_i$ only depends on $\alpha_i, \alpha_i'$ (not $\alpha_{i-1}$) and (ii) $\beta_{i-1}, \beta_{i}$ appear together in the transition tensor. 

Thus, with this interpretation, LPS and Inception PCs can be understood essentially as different patterns for constructing a non-negative tensor/probability distribution using complex parameters. LPS could also be in theory generalized to arbitrary vtrees (i.e. tree tensor network) with some work. The practical choice of which to use depends on two main factors. Firstly, it can be checked that the complexity of evaluting the LPS is $O(BN_1N_2^3d)$ per region, where $\mu = N_1$, $r = N_2$, and $d$ is the dimension of the indices (i.e. the number of categories of each $X$ variable). This is as compared to the $O(BN_1^2N_2^3)$ complexity for Inception PCs. Secondly, one should consider the performance scaling with the $N_1$ and $N_2$ parameters. The experiments of \citet{glasser2019expressive} showed little to no improvement of LPS over Born machines in terms of log-likelihood, which correspond to setting $N_1 > 1$ or $N_1 = 1$ respectively. In contrast, our experiments show significant improvements in log-likelihood when increasing $N_1$. This suggests that the Inception PC pattern is more effective with scaling w.r.t. the 1-norm latents.

\subsection{Squared Neural Families and PSD models}

Positive semi-definite (PSD) kernel models \citep{marteau2020non,rudi2021psd} specify probability distributions as:
\begin{equation}
    p(\bm{x}) \propto 
    \bm{\kappa}(\bm{x})^T \bm{A} \bm{\kappa}(\bm{x}) 
\end{equation}
where $\bm{\kappa}$ is some feature map (which is assumed to be finite-dimensional) and $\bm{A}$ a positive semi-definite matrix. Since this model is \emph{shallow}, \citet{sladek2023encoding} proposed to embed these models within a deep probabilistic circuit architecture by introducing \emph{PSD} nodes, where the feature map is given by a vector of circuit nodes. That is, the function given by the PSD node is defined as $\bm{f}(\bm{x})^T \bm{A} \bm{f}(\bm{x})$ where $\bm{f}(\bm{x}) = [f_1(\bm{x}), ... f_n(\bm{x})]$ is a vector of node functions. They considered in particular settings where the PSD nodes are either located at the leaves of the PC, or at the root of the PC. In both such cases one can encode this model efficiently as a $\mu$SOCS and thus an Inception PC \citep{loconte2024sumsquarescircuits}.

\citet{tsuchida2023squared} recently introduced squared neural families (SNEFY), a class of probabilistic models that specify unnormalized probability densities by squaring a neural network's output, i.e.,
\begin{equation}
    P(d\bm{x}; \bm{V}, \bm{\Theta}) = \frac{\mu(d\bm{x})}{Z(\bm{V}, \bm{\Theta})} || f(\bm{t}(\bm{x}); \bm{V}, \bm{\Theta}) ||^2
\end{equation}
where $\bm{X}$ are the variables and $\bm{V}$ and $\bm{\Theta}$ are the neural network parameters, $\bm{t}$ is a sufficient statistic, and $Z(\bm{V}, \bm{\Theta})$ is a normalizing constant. Under certain conditions on $f$, $\bm{t}$ and base measure $\mu$, including but not limited to exponential families, it is possible to efficiently compute the normalizing constant of this distribution efficiently. It was shown that some of these SNEFYs can be encoded as $\mu$SOCS \citep{loconte2024sumsquarescircuits}, and thus Inception PCs by Section \ref{apx:other_models_pc}. Another point of interest is that tractable SNEFYs can be used as expressive conditional probability distributions (i.e. $p(\bm{x}|\bm{y})$) by making the parameters a function of the condition $\bm{y}$ in a particular way. One can also use Inception PCs as a conditional probability model, either by (i) explicitly introducing evidence on a circuit respecting the joint distribution $p(\bm{x}, \bm{y})$ -- in which case it is not too hard to see that the result remains an Inception PC -- or (ii) by similarly making the InceptionPC parameters a (neural network) function of the condition \citep{shao2022conditional}.

We also refer readers to the excellent technical comparison in \citep{loconte2024sumsquarescircuits} for further details on the relationship between these model types and circuits. 

\section{Proofs}

\propConjugation*
\begin{proof}
    We show the first part inductively from leaves to the root. By assumption, we can compute the complex conjugate of the input functions. Thus we need to show that we can compute the conjugate of the sums and products efficiently, assuming that we can compute the conjugates of their inputs.

    Suppose that we have a sum $\node$; then we have that:
    $\overline{\pcfunc_{\node}} = \overline{\sum_{\node_i \in \inputs(\node)} \weight_{\node, \node_i} \pcfunc_{\node_i}} = \sum_{\node_i \in \inputs(\node)} \overline{\weight_{\node, \node_i}} \;\overline{\pcfunc_{\node_i}}$. Thus we can simply conjugate the weights and take the conjugated input nodes.

    Suppose that we are given a product $\node$; then we have that:
    $\overline{\pcfunc_{\node}} = \overline{\prod_{\node_i \in \inputs(\node)} \pcfunc_{\node_i}} = \prod_{\node_i \in \inputs(\node)} \overline{\pcfunc_{\node_i}}$. Thus we can take the conjugated input nodes.

    This procedure is clearly linear time and keeps exactly the same structure as the original circuit (thus smoothness and decomposability). If the input circuit is structured decomposable, then we can multiply $\pcfunc_{\circuit}$ and $\pcfunc_{\overline{\circuit}}$ as they are compatible \citep{vergari2021compositional}, producing a smooth and structured decomposable circuit as output.
\end{proof}

\thmMonoSuccinct*
\begin{proof}
    Given a set of $d$ variables $\bm{V}$, we consider the function:
    \begin{equation}
        p(\bm{V}) = n(\bm{V}) + 1
    \end{equation}
    where we write $n(\bm{V})$ for the  non-negative integers given by the binary representation. 
    
    \paragraph{Existence of Compact Str.Dec.Monotone Circuit} This function can be easily represented as a linear-size monotone structured-decomposable PC as follows:
    \begin{align*}
        p(\bm{V}) = \pcfunc_{\circuit}(\bm{V}) = \sum_{i=0}^{d-1} 2^i \mathds{1}_{V_i = 1} + 1
    \end{align*}
    which can also be easily smoothed if desired. 

    \paragraph{Lower Bound Strategy} It remains to show the lower bound on the size of the negative structured-decomposable PC $\circuit'$. Firstly, we have the following Lemma:
    \begin{lemma} \citep{martens2014expressive}
        Let $F$ be a function over variables $\bm{V}$ computed by a structured-decomposable and smooth circuit $\circuit$. Then there exists a partition of the variables $(\bm{X}, \bm{Y})$ with $\frac{1}{3} |\bm{V}| \leq |\bm{X}|, |\bm{Y}| \leq \frac{2}{3} |\bm{V}| $ and $N \leq |\circuit|^2$ such that:
        \begin{equation} \label{eq:factor}
            F(\bm{X}, \bm{Y}) = \sum_{i=1}^{N} G_i(\bm{X}) \times H_i(\bm{Y})
        \end{equation}
        for some functions $G_i, H_i$. 
    \end{lemma}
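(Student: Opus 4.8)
The plan is to read the needed structure off a vtree and then peel the circuit apart along a balanced cut. Since $\circuit$ is structured-decomposable, I would first fix a vtree $T$ that it respects: every product node decomposes according to $T$, so the scope of every node of $\circuit$ is the scope of some node of $T$, and these scopes form a laminar (nested) family.

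Next I would locate a balanced cut. Starting at the root of $T$ (scope $\bm V$, size $d = |\bm V|$) I descend repeatedly into the child of larger scope. Scope sizes strictly decrease, and each chosen child has size at least half its parent's; hence the first vtree node $v$ whose scope drops to size $\leq \frac23 d$ still has size $> \frac13 d$. Setting $\bm X = \scope(v)$ and $\bm Y = \bm V \setminus \bm X$ gives $\frac13 d \leq |\bm X|, |\bm Y| \leq \frac23 d$, the required balanced partition. Because $v$ is the least common ancestor in $T$ of exactly the variables $\bm X$, every vtree-node scope is either contained in $\bm X$, equal to $\bm X$, an ancestor scope $S_k$ with $\bm X \subset S_k$, or disjoint from $\bm X$; writing the $T$-ancestors of $v$ as $\bm X = S_0 \subset S_1 \subset \dots \subset S_t = \bm V$, each newly added part $S_k \setminus S_{k-1}$ lies in $\bm Y$.

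The core step is the decomposition itself. Let $\mathcal N$ be the set of nodes of $\circuit$ with scope exactly $\bm X$, and let $f_n(\bm X)$ be the function computed at $n \in \mathcal N$. I would prove by induction along the chain $S_0, S_1, \dots, S_t$ that every node of $\circuit$ of scope $S_k$ computes a function of the form $\sum_{n \in \mathcal N} f_n(\bm X)\, c_n(\bm Y)$ for some functions $c_n$ depending only on the $\bm Y$-variables in $S_k$. The base case $k = 0$ is immediate. For the step, a smooth sum node of scope $S_k$ has all inputs of scope $S_k$, so it is a linear combination of functions already in this form, which is again of this form; and a product node of scope $S_k$ splits, by structured decomposability relative to $T$, into exactly one input whose scope contains $\bm X$ (necessarily some $S_{k-1}$ on the chain) together with inputs of scope contained in $\bm Y$, so multiplying the inductive form for that input by the remaining $\bm Y$-functions preserves it. Crucially, decomposability forbids multiplying two distinct $\bm X$-containing functions, which is exactly why no higher powers of the $f_n$ appear and the representation stays linear over $\mathcal N$. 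Evaluating at the root ($S_t = \bm V$) yields $F(\bm X, \bm Y) = \sum_{n \in \mathcal N} f_n(\bm X)\, c_n(\bm Y)$, i.e.\ \eqref{eq:factor} with $G_n = f_n$, $H_n = c_n$ and $N = |\mathcal N|$; since the root is not in $\mathcal N$, charging each scope-$\bm X$ node to an outgoing edge toward the root gives $N < |\circuit|$.

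The main obstacle is making the inductive claim airtight — in particular verifying that $\bm X$ can influence the root only through nodes of scope exactly $\bm X$, and that these enter strictly linearly, never as products or powers. This is precisely where structured decomposability, rather than plain decomposability, is needed: it guarantees that the ancestor scopes $S_k$ form a single nested chain, so that each product on the path from $v$ to the root cleanly separates one $\bm X$-carrying factor from $\bm Y$-only factors. By comparison, the vtree-balancing descent and the final edge-counting for $N < |\circuit|$ are routine.
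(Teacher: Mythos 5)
The paper itself offers no proof of this lemma: it is imported directly from \citet{martens2014expressive}, so there is no in-paper argument to compare yours against. Your reconstruction follows the standard route --- locate a balanced node in the vtree induced by structured decomposability, then argue by induction up the ancestor chain that the variables $\bm{X}$ can reach the root only through nodes of scope exactly $\bm{X}$, and only linearly --- and the inductive core is sound: smoothness handles sum nodes, the laminar structure of scopes guarantees that each product on the chain has exactly one input whose scope contains $\bm{X}$ (the rest lying entirely in $\bm{Y}$), so no product of two $\bm{X}$-carrying factors can arise, and the edge-charging at the end does give the strict inequality $N < |\circuit|$.

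The one genuine gap is in the balancing step. Your descent ``into the child of larger scope, which has at least half its parent's scope'' silently assumes the vtree is binary, i.e.\ that every product node has exactly two inputs. The paper's definition of (structured) decomposability imposes no such restriction: a product of scope $S$ may decompose into many parts, the largest of which can contain far fewer than $|S|/2$ variables, so your descent can jump from a scope of size $> \tfrac{2}{3}|\bm{V}|$ directly to one of size well below $\tfrac{1}{3}|\bm{V}|$, and no single vtree node with a balanced scope need exist. The standard repairs are (i) binarize all products first, which changes the circuit size and so forces you to re-track the constant in $N < |\circuit|$, or (ii) stop at the first vtree node all of whose children have scope of size $\leq \tfrac{2}{3}|\bm{V}|$ and take $\bm{X}$ to be the union of a greedily chosen subset of those children's scopes; in case (ii) your set $\mathcal{N}$ of nodes with scope \emph{exactly} $\bm{X}$ may well be empty, and the summands $G_i H_i$ must instead come from splitting each product node of that scope into its $\bm{X}$-part and its $\bm{Y}$-part, with $N$ counting those product nodes. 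Either fix is routine, but as written your proof covers only circuits with binary products.
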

    Variations of this result have appeared multiple times in literature, e.g. Theorem 38 in \cite{martens2014expressive} and Theorem 2 in \citep{decolnet2021succinctness}. Unlike prior results, we do not place the restriction that $F$ is non-negative; this is a simple modification of the existing proof as this was only required to ensure the non-negativity of $G_i, H_i$, which we do not require.
    
    To show a lower bound on $|\circuit'|$, we can thus show a lower bound on $N$ for all functions $F(\bm{X}, \bm{Y}) = \pm \sqrt{n(\bm{V}) + 1}$. To do this, we use another Lemma (Lemma 13 in \citet{decolnet2021succinctness}):
    \begin{definition}
        Given a function $F$ over variables $\bm{X}, \bm{Y}$, we define the value matrix $M_{F(\bm{X}, \bm{Y})} \in \mathbb{R}^{2^{|\bm{X}|} \times 2^{|\bm{Y}|}}$ by:
        \begin{equation}
            M_{n(\bm{X}), n(\bm{Y})} := F(\bm{X}, \bm{Y})
        \end{equation}
    \end{definition}
    \begin{lemma} \citep{decolnet2021succinctness}
        Suppose Equation \ref{eq:factor} holds. Then $rank(M_{F(\bm{X}, \bm{Y})}) \leq N$.
    \end{lemma}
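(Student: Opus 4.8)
The plan is to recognize this as the standard linear-algebra fact that a matrix expressible as a sum of $N$ rank-one matrices has rank at most $N$; the only genuine work is setting up the indexing so that each product $G_i(\bm{X}) \times H_i(\bm{Y})$ in Equation~\ref{eq:factor} becomes an honest rank-one outer product.

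First I would vectorize the two families of factors. For each $i$, define $\bm{g}_i \in \reals^{2^{|\bm{X}|}}$ by $(\bm{g}_i)_{n(\bm{X})} = G_i(\bm{X})$ and $\bm{h}_i \in \reals^{2^{|\bm{Y}|}}$ by $(\bm{h}_i)_{n(\bm{Y})} = H_i(\bm{Y})$, where the entries are indexed over all assignments to $\bm{X}$ (resp. $\bm{Y}$) through their binary encodings $n(\cdot)$. Since the rows of $M_{F(\bm{X}, \bm{Y})}$ are indexed by $n(\bm{X})$ and its columns by $n(\bm{Y})$, the entrywise identity of Equation~\ref{eq:factor} translates directly into the matrix identity
\[
    M_{F(\bm{X}, \bm{Y})} = \sum_{i=1}^N \bm{g}_i \bm{h}_i^\top ,
\]
because the $(n(\bm{X}), n(\bm{Y}))$ entry of $\bm{g}_i \bm{h}_i^\top$ is exactly $G_i(\bm{X}) H_i(\bm{Y})$, and summing over $i$ recovers $F(\bm{X}, \bm{Y})$ in each matrix cell.

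Then I would invoke two elementary properties of rank. Each outer product $\bm{g}_i \bm{h}_i^\top$ has rank at most $1$, since its column space is spanned by the single vector $\bm{g}_i$; and rank is subadditive under matrix addition, i.e. $rank(A + B) \le rank(A) + rank(B)$. Applying subadditivity across the $N$ summands yields $rank(M_{F(\bm{X}, \bm{Y})}) \le \sum_{i=1}^N rank(\bm{g}_i \bm{h}_i^\top) \le N$, which is precisely the claim.

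I do not expect a substantial obstacle: the bound is an immediate consequence of rank subadditivity once the factorization is rewritten as a sum of outer products. The one point deserving a line of care is that the vectorization be well-defined and dimensionally consistent, namely that $n(\cdot)$ is a bijection from assignments of $\bm{X}$ (resp. $\bm{Y}$) onto $\{0, \dots, 2^{|\bm{X}|} - 1\}$ (resp. $\{0, \dots, 2^{|\bm{Y}|} - 1\}$), so that $\bm{g}_i$ and $\bm{h}_i$ indeed live in the correct ambient spaces and the outer products have the matching dimensions of $M_{F(\bm{X}, \bm{Y})}$.
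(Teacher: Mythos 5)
Your proof is correct and is exactly the standard argument behind this fact: the paper itself gives no proof, citing \citep{decolnet2021succinctness}, and the cited result rests on the same observation that Equation~\ref{eq:factor} writes $M_{F(\bm{X}, \bm{Y})}$ as a sum of $N$ rank-one outer products, so rank subadditivity gives the bound. Your care about $n(\cdot)$ being a bijection onto the row and column index sets is the right (and only) detail to check.
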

    Thus, it suffices to lower bound $rank(M_{F(\bm{X}, \bm{Y})})$ over all partitions $\bm{X}, \bm{Y}$ such that $\frac{1}{3} |\bm{V}| \leq |\bm{X}|, |\bm{Y}| \leq \frac{2}{3} |\bm{V}|$.
    
    \paragraph{Lower Bound} Given such a partition $\bm{X}, \bm{Y}$, assume w.l.o.g. $|\bm{X}| \leq |\bm{Y}|$. Consider any function $F(\bm{V})$ such that $F(\bm{V}) = \pm \sqrt{n(\bm{V}) + 1} 
$.%
    
    Each variable $X \in \bm{X}$ corresponds to some variable in $\bm{V}$. We write $idx(X)$ to denote the \emph{index} of the variable $X$ corresponds to; for example, if $X$ is $V_4$, then $idx(X) = 4$. 
    Then we have the following:
    \begin{equation}
        F(\bm{X}, \bm{Y}) = \pm \sqrt{\sum_{i=0}^{|\bm{X}|-1} 2^{idx(X_i)} X_i + \sum_{i=0}^{d-|\bm{X}|-1} 2^{idx(Y_i)} Y_i + 1}
    \end{equation} 
    We write $\iota(\bm{X}) := \sum_{i=0}^{|\bm{X}|-1} 2^{idx(i)} X_i$ and $\iota(\bm{Y}) := \sum_{i=0}^{d-|\bm{X}|-1} 2^{idx(i)} Y_i$ such that $F(\bm{X}, \bm{Y}) = \pm \sqrt{\iota(\bm{X}) + \iota(\bm{Y}) + 1}$. Note that $\iota$ is injective as the $idx(X_i)$ are 
    distinct for each $i$ (sim. for $idx(Y_i)$). 
    
    Now we need the following Lemma: %

    \begin{lemma} \label{lem:rowcol}
        For any $\epsilon > 0$, and for sufficiently large $d$, there exists at least $M = 2^{(\frac{1}{4} - \epsilon) d}$ distinct instantiations of $\{\bm{x}_i\}_{0=1}^{M-1}$ and $M$ distinct instantiations of $\{\bm{y}_i\}_{i=0}^{M-1} $ of $\bm{Y}$ such that $p_i := \iota(\bm{x_i}) + \iota(\bm{y_i}) + 1$ are distinct primes, and $\iota(\bm{x_j}) + \iota(\bm{y_k}) + 1 \neq p_i$ for any $0 \leq i, j, k \leq M - 1$ except $i = j = k$. 
     \end{lemma}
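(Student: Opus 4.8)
The plan is to first dispose of condition~2 almost for free and thereby reduce the lemma to constructing a large matching of ``prime pairs.'' Since $\bm{X}$ and $\bm{Y}$ partition $\bm{V}$ with disjoint index sets covering $\{0,\dots,d-1\}$, the map $(\bm{x},\bm{y}) \mapsto \iota(\bm{x}) + \iota(\bm{y})$ is a bijection onto $\{0,\dots,2^d-1\}$ (it simply reassembles the full binary number $n(\bm{V})$). Consequently $\iota(\bm{x}_j) + \iota(\bm{y}_k) = \iota(\bm{x}_i) + \iota(\bm{y}_i)$ forces $\bm{x}_j = \bm{x}_i$ and $\bm{y}_k = \bm{y}_i$, so as soon as the chosen $\bm{x}_i$ are pairwise distinct and the chosen $\bm{y}_i$ are pairwise distinct, $\iota(\bm{x}_j)+\iota(\bm{y}_k)+1 = p_i$ can only happen when $i=j=k$; the same bijection shows the $p_i$ are automatically distinct. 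Hence it remains only to exhibit $M$ pairs $(\bm{x}_i,\bm{y}_i)$ with pairwise-distinct first coordinates and pairwise-distinct second coordinates such that each $\iota(\bm{x}_i)+\iota(\bm{y}_i)+1$ is prime.

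I would then count the ``good pairs,'' meaning pairs $(\bm{x},\bm{y})$ with $\iota(\bm{x})+\iota(\bm{y})+1$ prime. Under the bijection these correspond exactly to the primes in $\{1,\dots,2^d\}$, so by Chebyshev's bound (the prime number theorem is more than enough) there are $P \ge c\,2^d/d$ of them for large $d$ and some constant $c>0$. Writing $a = |\bm{X}| \le |\bm{Y}| = b$ with $a+b=d$, the structural fact driving the argument is a degree bound: a fixed row $\bm{x}$ contains at most $2^b$ good pairs (at most one per value of $\bm{y}$) and a fixed column at most $2^a$. I would extract the matching greedily, tracking the already-used rows and columns: after $t$ pairs have been selected, the number of good pairs blocked by sharing a used row or column is at most $t(2^a+2^b) \le t\,2^{b+1}$, so as long as $t\,2^{b+1} < P$ an unblocked good pair remains and the matching extends. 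This yields a matching of size $M \ge P/2^{b+1} \ge c\,2^{a-1}/d = 2^{a - O(\log d)}$, and since the balanced-partition constraint gives $a \ge \tfrac13 d > \tfrac14 d$, this exceeds $2^{(\frac14-\epsilon)d}$ once $d$ is large enough for the polynomial factor $d$ and the constant $c$ to be swallowed by $\epsilon$.

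I expect the main obstacle to be the interaction between primality and the distinctness (matching) requirement: a priori the good pairs could be concentrated in only a few rows or columns, which would block any attempt to choose distinct coordinates. The per-row/per-column degree bound is exactly what rules this out, and verifying that the greedy process does not stall before reaching $M$ is the quantitatively delicate point. The prime-density input itself is mild — it only needs to guarantee that $P/2^{b}$ grows like $2^{a}$ up to polynomial factors, which is precisely where the hypothesis $|\bm{X}|,|\bm{Y}| \le \tfrac23|\bm{V}|$ (equivalently $a \ge d/3$) is used.
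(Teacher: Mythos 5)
Your proof is correct, and it takes a genuinely different---and in fact cleaner and quantitatively stronger---route than the paper's. The paper also counts prime pairs via the observation that each prime $p \leq 2^d$ has exactly one preimage under $(\bm{x},\bm{y}) \mapsto \iota(\bm{x})+\iota(\bm{y})+1$, but it does not push the bijectivity further; it treats the cross-collision condition $\iota(\bm{x}_j)+\iota(\bm{y}_k)+1 \neq p_i$ as a genuine constraint to be engineered around. Concretely, it first argues by a counting contradiction that at least $M$ rows $\bm{x}_i$ each admit at least $2M^2+1$ prime completions, and then runs an iterative selection in which each new $\bm{y}_m$ must avoid up to $2M^2$ forbidden values arising from potential collisions with previously chosen pairs; the need for $2M^2+1$ completions per row is what caps the bound at $M=2^{(\frac{1}{4}-\epsilon)d}$ (essentially $M^2 \lesssim 2^{|\bm{Y}|}$ with $|\bm{Y}|$ possibly as small as $d/2$). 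Your observation that the bijection forces $\iota(\bm{x}_j)+\iota(\bm{y}_k)=\iota(\bm{x}_i)+\iota(\bm{y}_i)$ to imply $\bm{x}_j=\bm{x}_i$ and $\bm{y}_k=\bm{y}_i$ makes both the distinctness of the $p_i$ and the cross-collision condition automatic once the rows and columns are pairwise distinct, reducing the lemma to extracting a large matching among the $\pi(2^d)\gtrsim 2^d/d$ prime pairs; the trivial degree bounds ($2^{|\bm{Y}|}$ per row, $2^{|\bm{X}|}$ per column) and the greedy argument then yield a matching of size $2^{|\bm{X}|-O(\log d)} \geq 2^{(\frac{1}{3}-\epsilon)d}$, which strictly improves the stated exponent and a fortiori suffices for the downstream rank lower bound of Theorem \ref{thm:monotone_succinct}. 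Both arguments use the same prime-density input and the same balancedness hypothesis $|\bm{X}|\geq d/3$; what yours buys is the elimination of the two-stage completion-counting argument and a better constant in the exponent.
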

     \begin{proof}
         We begin by lower bounding the number of \emph{prime pairs}; that is, the number of instantiations $(\bm{x}, \bm{y})$ of $\bm{X}, \bm{Y}$ such that $(F(\bm{x}, \bm{y}))^2 = \iota(\bm{x}) + \iota(\bm{y}) + 1$ is prime. Each prime $p$ less than or equal to $2^{d}$ will have exactly 1 prime pair. %
         The number of primes $\pi(m)$ less than or equal to any given integer $m \geq 17$ is lower bounded by $\frac{m}{\ln m}$ \cite{rosser1962approximate}. Thus, we have that the number of prime pairs is at least:
          \begin{equation}
             \frac{2^d}{d \ln 2}
         \end{equation}

         Given any instantiation $\bm{x}$ of $\bm{X}$, we call $\bm{y}$ a \emph{prime completion} of $\bm{x}$ if $(\bm{x}, \bm{y})$ is a prime pair. We now claim that there are at least $M$ instantiations of $\bm{X}$ such that each has at least $2M^2 + 1$ prime completions. Suppose for contradiction this was not the case. Then the total number of prime pairs is upper bounded by:
         \begin{align*}
             &(M - 1) \times 2^{d - |\bm{X}|} + (2^{|\bm{X}|} - M + 1) \times 2M^2 \\
            &< 2^{(\frac{1}{4} - \epsilon) d} \times 2^{d - |\bm{X}|} + 2^{|\bm{X}|} \times 2^{(\frac{1}{2} - 2\epsilon) d + 1} \\
            &= 2^{(\frac{5}{4} - \epsilon) d - |\bm{X}|} + 2^{(\frac{1}{2} - 2\epsilon) d + |\bm{X}| + 1} \\
            &\leq 2^{(\frac{11}{12} - \epsilon) d} + 2^{(1 - 2\epsilon) d + 1}
         \end{align*}
         The first line is an upper bound on the number of prime pairs in this case; $(M - 1)$ instantiations of $\bm{X}$ with any $\bm{y}$ being a potential prime completion ($2^{d - |\bm{X}|}$ total), and the rest having at most $2M^2$ prime completions. The second line follows by substituting $M$, the third by rearrangement, and the fourth using the fact that $\frac{1}{3}d \leq |\bm{X}| \leq \frac{1}{2}d$. But this upper bound is less than the lower bound above, for sufficiently large $d$. Thus, we have a contradiction. 

         Now, to finish the Lemma, we describe an algorithm for picking the $M$ instantiations $\{\bm{x}_i\}_{i=0}^{M-1}$, $\{\bm{y}_i\}_{i=0}^{M-1}$. From the claim above, we have $M$ instantiations $\{\bm{x}_i\}_{i=0}^{M-1}$ each with at least $2M^2 + 1$ prime completions. We iterate over $m = 0, ..., M-1$. Suppose that at iteration $m$, we have already chosen $\{\bm{y}_i\}_{i=0}^{m-1}$ such that $p_i := \iota(\bm{x_i}) + \iota(\bm{y_i}) + 1$ are distinct primes for $0 \leq i \leq m - 1$, and $\iota(\bm{x_j}) + \iota(\bm{y_k}) + 1 \neq p_i$ for any $0 \leq j \leq M - 1$ and $0 \leq i, k \leq m - 1$ except $i = j = k$. For $\bm{x_m}$, we aim to choose a prime completion $\bm{y_m}$ such that 
         \begin{align}
             &(i) \; \iota(\bm{x}_j) + \iota(\bm{y}_k) + 1 \neq \iota(\bm{x}_m) + \iota(\bm{y}_m) + 1 \\ 
             &(ii) \; \iota(\bm{x}_j) + \iota(\bm{y}_m) + 1 \neq p_k + 1
         \end{align}
         for any $0 \leq j \leq M - 1$ and any $0 \leq k \leq m$ except $j = k = m$. Thus, there are at most $2 * M * (m + 1) \leq 2M^2$ values that $\iota(\bm{y}_m)$ must not take; as we have $2M^2 + 1$ prime completions, we can always choose a $\bm{y}_m$ satsifying the conditions (i), (ii). Given conditions (i), (ii) together with the inductive hypothesis, we have that 
         $p_i := \iota(\bm{x_i}) + \iota(\bm{y_i}) + 1$ are distinct primes for $0 \leq i \leq (m - 1) + 1$, and $\iota(\bm{x_j}) + \iota(\bm{y_k}) + 1 \neq p_i$ for any $0 \leq j \leq M - 1$ and $0 \leq i, k \leq (m - 1) + 1$ except $i = j = k$.
    \end{proof}

    With this Lemma in hand, we can finish the argument as follows. By Lemma \ref{lem:rowcol}, we have $M$ distinct instantiations $\{\bm{x}_{i}\}_{i=0}^{M - 1}, \{\bm{y}_{i}\}_{i=0}^{M - 1}$ such that $p_i := \iota(\bm{x}_i) + \iota(\bm{y}_i)$ is prime for every $i$; suppose that these are ordered such that $p_0 < ... < p_{M-1}$. Now consider the submatrix $\bm{M'} \in \mathbb{R}^{M \times M}$ of $\bm{M}_{F(\bm{X}, \bm{Y})}$ obtained by taking the rows $(n(\bm{x}_i))_{i=0}^{M-1}$ and columns $(n(\bm{y}_i))_{i=0}^{M - 1}$ (in-order). The rank $rank(\bm{M}_{F(\bm{X}, \bm{Y})})$ is lower bounded by $rank(\bm{M}')$; thus, we seek to find $rank(\bm{M}')$.

     \begin{lemma} 
         $rank(\bm{M}') = M$
     \end{lemma}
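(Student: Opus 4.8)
The goal is to show that the $M\times M$ matrix $\bm{M}'$, whose $(j,k)$ entry is $F(\bm{x}_j,\bm{y}_k)=\pm\sqrt{v_{jk}}$ with $v_{jk}:=\iota(\bm{x}_j)+\iota(\bm{y}_k)+1$, is nonsingular. My plan is to prove $\det(\bm{M}')\neq 0$ by expanding the determinant and isolating one term whose contribution cannot be cancelled. Before expanding, I would record the arithmetic facts supplied by Lemma~\ref{lem:rowcol}: the diagonal values $v_{ii}=p_i$ are distinct primes, while every off-diagonal value $v_{jk}$ with $j\neq k$ is \emph{not equal} to any $p_\ell$. I would additionally arrange, at no cost, for the primes to be large, say $p_i\in(2^{d-1},2^d]$: the number of primes in this interval is still $\Omega(2^d/d)$, which comfortably exceeds $M$, and each such prime is realized by a unique prime pair since $\iota(\bm{x})+\iota(\bm{y})$ ranges bijectively over $\{0,\dots,2^d-1\}$. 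The purpose of this lower bound is the size estimate $v_{jk}\le 2^d<2p_i$, which upgrades divisibility to equality: if $p_i\mid v_{jk}$, then $v_{jk}$ is a positive multiple of $p_i$ below $2p_i$, hence $v_{jk}=p_i$.

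Next I would expand $\det(\bm{M}')=\sum_\sigma \mathrm{sgn}(\sigma)\prod_i M'_{i,\sigma(i)}$. Each product equals $\pm\sqrt{N_\sigma}$ with $N_\sigma:=\prod_i v_{i,\sigma(i)}$; writing $\sqrt{N_\sigma}=t_\sigma\sqrt{s_\sigma}$ with $s_\sigma$ square-free and $t_\sigma\in\mathbb{Z}_{>0}$, the determinant becomes a $\mathbb{Z}$-linear combination $\sum_s c_s\sqrt{s}$ over distinct square-free integers $s$. By the classical fact that square roots of distinct square-free positive integers are linearly independent over $\mathbb{Q}$, it suffices to exhibit a single $s^\ast$ with $c_{s^\ast}\neq 0$.

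The natural candidate is $s^\ast=\prod_i p_i$, which is square-free because the $p_i$ are distinct primes. The identity permutation gives $N_{\mathrm{id}}=\prod_i v_{ii}=\prod_i p_i=s^\ast$, so it contributes $\pm t_{\mathrm{id}}=\pm 1$ (as $N_{\mathrm{id}}$ is already square-free). The crux is to show that \emph{no other} permutation contributes to $s^\ast$. If $s_\sigma=s^\ast$, then $\prod_i p_i\mid N_\sigma$, so each prime $p_i$ divides some factor $v_{j,\sigma(j)}$; by the size bound above this forces $v_{j,\sigma(j)}=p_i$, and then the off-diagonal exclusion from Lemma~\ref{lem:rowcol} forces $j=\sigma(j)=i$, i.e.\ $\sigma(i)=i$. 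Since this holds for every $i$, we get $\sigma=\mathrm{id}$, a contradiction. Hence $c_{s^\ast}=\pm 1\neq 0$, so $\det(\bm{M}')\neq 0$ and $\mathrm{rank}(\bm{M}')=M$.

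I expect the main obstacle to be precisely this interplay between divisibility and the exclusion condition: Lemma~\ref{lem:rowcol} only rules out \emph{equalities} $v_{jk}=p_\ell$, not divisibilities, so the argument genuinely needs the large-prime refinement to convert ``$p_i\mid v_{j,\sigma(j)}$'' into ``$v_{j,\sigma(j)}=p_i$.'' I would therefore check that the strengthened prime selection is compatible with the construction of Lemma~\ref{lem:rowcol}, which it is, since that construction relies only on a lower bound for the number of prime pairs, and this bound survives restriction to the top dyadic interval $(2^{d-1},2^d]$.
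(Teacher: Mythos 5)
Your proof is correct, but it takes a genuinely different route from the paper's. The paper argues by induction on the leading principal submatrices $\bm{M}'^{(i)}$: it writes $\det(\bm{M}'^{(i)}) = \alpha \bm{M}'_{ii} + \beta$ with $\alpha, \beta$ in the field extension $\mathbb{Q}[\sqrt{P_i}]$ generated by the primes dividing the off-diagonal squared entries, and shows $\bm{M}'_{ii} = \pm\sqrt{p_i}$ lies outside that field. You instead expand $\det(\bm{M}')$ globally via the Leibniz formula and invoke linear independence over $\mathbb{Q}$ of square roots of distinct square-free integers, isolating the coefficient of $\sqrt{\prod_i p_i}$, which only the identity permutation can produce. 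Both arguments must rule out $p_i$ \emph{dividing} (not just equalling) some $v_{jk} = \iota(\bm{x}_j)+\iota(\bm{y}_k)+1$ with $(j,k)\neq(i,i)$, and here the two proofs diverge in an instructive way: the paper exploits the ordering $p_0 < \cdots < p_{M-1}$ and the restriction to $j,k \le i$ to get $v_{jk} \le p_j + p_k - 1 < 2p_i$, whereas your global expansion sees all entries at once, so you correctly recognize that you need the extra device of selecting the primes from the top dyadic interval $(2^{d-1}, 2^d]$ so that $v_{jk} \le 2^d < 2p_i$ holds uniformly. That refinement does require strengthening Lemma~\ref{lem:rowcol}, and your compatibility check is right: the counting argument there only needs $\Omega(2^d/d)$ prime pairs, which survives the restriction. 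On balance, the paper's induction needs no modification of Lemma~\ref{lem:rowcol} but relies on the slightly more delicate field-extension bookkeeping; your version is arguably more transparent (a single application of a classical independence fact) at the cost of a small strengthening of the prime-selection lemma. Either way the heart of the matter is identical: the diagonal carries distinct primes that provably cannot be reassembled from the off-diagonal entries.
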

     \begin{proof}
         This proof is a variation on Example 10 from  \cite{fawzi2015positive}, but differs from that matrix in that there is no guarantee that the entries are increasing. As such, we rely on a slightly different technique to prove this result.

         Recall that $F(\bm{X}, \bm{Y}) = \pm \sqrt{\iota(\bm{X}) + \iota(\bm{Y}) + 1}$. Thus, the matrix $\bm{M'}$ is given by:
         \begin{equation}
         \bm{M}'_{ij} = \pm \sqrt{\iota(\bm{x_i}) + \iota(\bm{y_j}) + 1}
         \end{equation}
         Now consider the submatrices $\bm{M'}^{(1)}, ..., \bm{M'}^{(M)}$ defined by $\bm{M'}^{(i)} := \bm{M'}_{0:i-1, 0:i-1}$ (i.e. the first $i$ rows and columns). We show by induction that $\bm{M'}^{(i)}$ has rank $i$. The base case $i = 1$ is clear. 
         
         For the inductive step, suppose that $\bm{M'}^{(i - 1)}$ has rank $i - 1$. Then consider $\bm{M'}^{(i)}$. Note that the square of the  bottom right entry $(\bm{M'}_{i-1,i-1})^2 = \iota(\bm{x}_{i-1}) + \iota(\bm{y}_{i-1}) + 1 = p_{i-1}$ is prime. We now claim that $(\bm{M'}_{jk})^2$ is not a positive integer multiple of $p_i$ for any $0 \leq j, k \leq i - 1$ except $j = k = i - 1$. 
         
         Firstly, by Lemma \ref{lem:rowcol} there is no $j, k$ such that $(\bm{M'}_{jk})^2 = \iota(\bm{x}_j) + \iota(\bm{y}_k) + 1 = p_i$ unless $j = k = i-1$, i.e. a multiple of $1$ is not possible. We further have that: \begin{align*}
             &\iota(\bm{x}_j) + \iota(\bm{y}_k) + 1 \\ &\leq \iota(\bm{x}_j) + \iota(\bm{y}_j) + \iota(\bm{y}_k) + \iota(\bm{x}_k) + 1 \\ &= p_j + p_k - 1 \\
             &< 2p_i - 1
         \end{align*}
         
          Thus $(\bm{M'}_{jk})^2$ cannot be a positive integer multiple of $p_i$. 

         Now, the determinant of the matrix $\bm{M'}^{(i)}$ takes the form $\alpha \bm{M'}_{i-1,i-1} + \beta$, where $\alpha$ is the determinant of $\bm{M'}^{(i - 1)}$, and $\beta$ is a multilinear function of the entries of $\bm{M}'^{(i)}$ \emph{excluding} $\bm{M'}_{i-1,i-1}$. Note that both $\alpha$ and $\beta$ are in the extension field $\mathbb{Q}[\sqrt{P_i}]$, where $P_i$ is defined to be the set of all primes that divide $(\bm{M'}_{jk})^2$ for some $0 \leq j, k \leq i - 1$ except $j = k = i - 1$.

         Now, $(\bm{M'}_{i-1,i-1})^2  = p_i$ is not in this set $P_i$, as we showed that no other $(\bm{M'}_{jk})^2$ can be a multiple of $p_i$. Thus $\bm{M'}_{i-1,i-1} = \pm \sqrt{p_i}$ is not in this extension field. 
         By the inductive assumption, $\alpha \neq 0$, and so $det(\bm{M'}^{(i)}) =\alpha \bm{M'}_{i-1, i-1} + \beta $ must be nonzero also. Thus $\bm{M}'^{(i)}$ has full rank, i.e. rank $i$.

    \end{proof}
     
    Putting it all together, we have shown that given any square root function $F(\bm{V}) = \pm \sqrt{n(\bm{V}) + 1}$ and any structured-decomposable and smooth circuit $\circuit'$ computing $F$, and any balanced partition $\bm{X}, \bm{Y}$ of $\bm{V}$, then for any $\epsilon > 0$ and sufficiently large $d$, we have a lower bound $2^{(\frac{1}{4} - \epsilon)d} < rank(\bm{M}') < rank(\bm{M}_{F(\bm{X}, \bm{Y})}) < |\circuit'|^2$.
\end{proof}

\corExpressive*
\begin{proof}
    As noted, Inception PCs can express both monotone and squared PCs as special cases. Inception PCs are strictly more expressive efficient than monotone circuits by Theorem \ref{thm:sq_succinct}, as they can express squared PCs; similarly, Inception PCs are strictly more expressive efficient than squared circuits by Theorem \ref{thm:monotone_succinct}.
\end{proof}

\section{Algorithms}

\begin{algorithm}[tbp]
    \KwInput{Smooth and (structured) decomposable probabilistic circuit with root node $n$}
    \If{$n$ \emph{is input node}}{\Return $\texttt{INPUT}(\overline{\pcfunc_{\node}})$}
    \ElseIf{$n$ \emph{is product node}}{\Return $\texttt{PROD}(\{\texttt{CONJ}(\node_i)\}_{\node_i \in \inputs(\node)})$}
    \ElseIf{$n$ \emph{is sum node}}{\Return $\texttt{SUM}(\{\texttt{CONJ}(\node_i)\}_{\node_i \in \inputs(\node)}, \{\overline{\weight_{\node, \node_i}}\}_{\node_i \in \inputs(\node)})$}
    \KwResult{Smooth and (structured) decomposable probabilistic circuit with root node $\overline{\node}$ such that $\pcfunc_{\overline{\node}} = \overline{\pcfunc_{\node}}$}
    \caption{Conjugation $\texttt{CONJ}(\node)$}
    \label{alg:conj}
\end{algorithm}

\begin{algorithm}[tbp]
    \KwInput{Smooth and (structured) decomposable probabilistic circuit with root node $n$; subset of variables $\bm{W} \subseteq \bm{V}$}
    \If{$n$ \emph{is input node}}{\Return $\texttt{INPUT}(\sum_{\bm{w}} \pcfunc_{\node} (\bm{w}, \bm{V} \setminus \bm{W}))$}
    \ElseIf{$n$ \emph{is product node}}{\Return $\texttt{PROD}(\{\texttt{MARG}(\node_i; \bm{W})\}_{\node_i \in \inputs(\node)})$}
    \ElseIf{$n$ \emph{is sum node}}{\Return $\texttt{SUM}(\{\texttt{MARG}(\node_i; \bm{W})\}_{\node_i \in \inputs(\node)}, \{\weight_{\node, \node_i}
    \}_{\node_i \in \inputs(\node)})$}
    \KwResult{Smooth and (structured) decomposable probabilistic circuit with root node $\node'$ such that $\pcfunc_{\node'}(\bm{V} \setminus \bm{W}) = \overline{\pcfunc_{\node}}$}
    \caption{Marginalization $\texttt{MARG}(\node; \bm{W})$}
    \label{alg:marg}
\end{algorithm}

\begin{algorithm*}[tbp]
    \KwInput{Compatible probabilistic circuits with root nodes $n^{(1)}, n^{(2)}$}
    \If{$n^{(1)}, n^{(2)}$ \emph{are input nodes}}{\Return $\texttt{INPUT}(\pcfunc_{n^{(1)}} \times \pcfunc_{n^{(2)}})$}
    \ElseIf{$n^{(1)}, n^{(2)}$ \emph {are product nodes}}{\Return $\texttt{PROD}\left(\{\texttt{MULTIPLY}(n^{(1)}_i, n^{(2)}_i)\}_{(n^{(1)}_i, n^{(2)}_i) \in \texttt{SORT-NODES-BY-SCOPE}(\inputs(n^{(1)}), \inputs(n^{(2)}))}\right)$}
    \ElseIf{$n^{(1)}, n^{(2)}$ \emph {are sum nodes}}{\Return $\texttt{SUM}\left(\{\texttt{MULTIPLY}(n^{(1)}_i, n^{(2)}_j)\}_{n^{(1)}_i \in \inputs(n^{(1)}), n^{(2)}_j \in \inputs(n^{(2)})}, \{\weight_{n^{(1)}_i} \weight_{n^{(2)}_j}\}_{n^{(1)}_i \in \inputs(n^{(1)}), n^{(2)}_j \in \inputs(n^{(2)})}\right)$}
    \KwResult{Smooth and structured decomposable probabilistic circuit with root node $\node'$ such that $\pcfunc_{\node'} = \pcfunc_{n^{(1)}} \times \pcfunc_{n^{(2)}}$}
    \caption{Multiplication $\texttt{MULTIPLY}(\node^{(1)}, \node^{(2)})$}
    \label{alg:prod}
\end{algorithm*}

\begin{algorithm}[tbp]
    \KwInput{Augmented probabilistic circuit with root node $n$, over variables $\bm{V}, \bm{U}, \bm{W}$}
    $\node' \gets \texttt{MARG}(\node; \bm{W})$\;
    $\node'' \gets \texttt{CONJ}(\node')$\;
    $\node''' \gets \texttt{MULTIPLY}(\node'', \node')$\;
    \Return $\texttt{MARG}(\node'''; \bm{U})$\;
    \KwResult{InceptionPC}
    \caption{InceptionPC Construction}
    \label{alg:inception}
\end{algorithm}

For completeness, we provide algorithms for (i) the conjugation operation on PCs with complex weights; and (ii) the procedure of constructing materialized \modelshort{}s. For convenience, we will interchangably refer to circuits $\circuit$ and their root node $\node$. We will write $\texttt{SUM}(C, \Theta)$ to denote the operation of constructing a sum node with inputs $C$ and weights $\Theta$, $\texttt{PROD}(C)$ to denote the operation of constructing a product node with inputs $C$, and $\texttt{INPUT}(f)$ to denote the operation of constructing an input node with function $f$.

In Algorithm \ref{alg:conj}, we show the algorithm for conjugation, which simply conjugates the weights at sum nodes and input functions. In Algorithm \ref{alg:inception}, we show the process of constructing materialized \modelshort{}s. This depends on the basic $\texttt{MULTIPLY}$ and $\texttt{MARG}$ on circuits \citep{vergari2021compositional}, which we reproduce in Algorithms \ref{alg:prod}, \ref{alg:marg}. Note that multiplying two circuits while not blowing up the circuit size generally requires them to be \emph{compatible} \citep{vergari2021compositional},\footnote{It was recently shown that incompatible circuits can still sometimes be multiplied efficiently \citep{ZhangAISTATS25}; that is, compatibility is a sufficient but not necesssary condition.} which, intuitively speaking, requires the product nodes in both circuits to decompose their scope in the same way. In Algorithm \ref{alg:marg} we multiply a structured decomposable and smooth circuit and its conjugate, which share the same structure and thus are compatible. 

\section{Log-Sum-Exp Trick with Complex Numbers} \label{apx:logsumexp_complex}

To avoid numerical under/overflow, we perform computations in log-space when computing a forward pass of a PC. For complex numbers, this means keeping the \emph{modulus} of the number in log-space and the \emph{argument} in linear-space. 

Explicitly, given a set of complex numbers $x_1 = e^{u_1 + iv_1}, ..., x_n=e^{u_n + iv_n}$ such that the log-modulus $u_k \in \mathbb{R}$ and argument $v_k \in \mathbb{R}$ are stored in memory, we can compute the log-modulus $u$ and argument $v$ of $x = x_1 + \ldots + x_n$ as follows:
\begin{align}
    u &= \log(\left|e^{u_1 - u_{\max} + iv_1} + \ldots + e^{u_n - u_{\max} + iv_n}\right|) + u_{\max} \\
    v &= \arg(e^{u_1 - u_{\max} + iv_1} + \ldots + e^{u_n - u_{\max} + iv_n})
\end{align}
where $u_{\max} = \max(u_1, .., u_n)$, $|\cdot|$ is the modulus function for complex numbers, and $\arg$ is the principal value of the argument function (i.e. $\in (-\pi, \pi]$).

\section{Experimental Details} \label{apx:experiment_details}

For all experiments, we use the Adam optimizer \citep{kingma2015adam} with learning rate $0.01$. We use a batch size of $64$ for the binary datasets, and $250$ for all image datasets (ImageNet32, ImageNet64). For the binary datasets, we train for 100 epochs and average across 5 runs. For the image datasets, we train for 200 epochs. Model training was performed on NVIDIA A6000 48GB and NVIDIA H100 80GB GPUs.

For the PC input functions, we use categorical inputs for each pixel $\var$. For the binary datasets, this corresponds to 2 parameters for each variable $\pcfunc(\var = i)$ for each $i = 0, 1$, while for the image datasets, for each color channel, we have 256 parameters $\pcfunc(\var = i)$ for each $i = 0, ..., 255$. For monotone PCs, this takes values in $\mathbb{R}^{\geq 0}$, for squared real PCs or real \modelname{}PCs, this takes values in $\mathbb{R}$, and for squared complex PCs or complex \modelname{}PCs this takes values in $\mathbb{C}$. 

\section{Additional Experimental Results for Binary Datasets} \label{apx:experiment_train}

We show in Table \ref{tbl:debd_train} training negative log-likelihoods on the twenty binary datasets, together with the difference compared with the test negative log-likelihoods shown in Table \ref{tbl:debd}. It can be seen that the complex parameterization of Inception PCs almost always achieves the best training log-likelihoods, sometimes with large margins over the other models. This indicates the added expressive efficiency obtained through the InceptionPC architecture and employing complex rather than non-negative or real parameters. However, they also generally overfit by a larger margin, leading to some cases where another model achieves better test-set performance. Preventing this overfitting, perhaps through regularization, is thus an important consideration when applying these models to small datasets.

\begin{table*}[t]
\centering
\scalebox{0.88}{
    
\begin{tabular}{@{}cccccccc@{}}
\toprule
\multicolumn{1}{c}{\textbf{Dataset}} & \multicolumn{7}{c}{\textbf{Model}}                                                                         \\ \cmidrule(l){2-8} 
                                     & Monotone PC & \multicolumn{3}{c}{Squared PC} & \multicolumn{3}{c}{\modelname{} PC}                        \\ 
                                     &            & Non-negative    & Real  & Complex      & Non-negative & Real & Complex \\ \midrule
nltcs                                &   6.03 (-0.01)       &       6.02 (-0.00)       &  6.01 (-0.02)  &     6.00 (-0.03)   &               5.96 (-0.04)         &    6.98 (-0.04)   &               \textbf{5.91} (-0.11)          \\
msnbc                      &       6.25 (-0.00)    &       6.24 (+0.01)       & 6.11 (+0.01)  &    6.07(-0.00)       &       6.06 (+0.01)           &     6.05 (-0.00)    &             \textbf{6.04} (-0.00)         \\
kdd-2k                     &   2.36 (+0.23)        &    2.35 (+0.25)      &  2.49 (+0.26)  &   2.35 (+0.23)        &   2.34 (+0.22)            &     2.36 (+0.23)       &         \textbf{2.32} (+0.20)           \\
plants                        &     13.44 (-0.26)       &        13.14 (-0.24)        & 14.77 (-0.29) &    12.80 (-0.33)       &              12.19 (-0.62)       &    12.64 (-0.42)    &   \textbf{11.81} (-0.95)                  \\ 
jester                        &    52.91 (+0.14)       &        52.74 (+0.18)        &  55.00 (+0.49)   &   52.75 (-0.22)      &                     51.13 (-1.38)        &   51.13 (-1.47)  & \textbf{49.74} (-3.01)                 \\ 
audio                      &     40.22 (-0.05)        &       40.04 (-0.04)   &  41.56 (+0.07) &       39.81 (-0.20)      &                    39.08 (-0.80)  &  39.20 (-0.71)  &   \textbf{38.37} (-1.68)          \\ 
netflix                       &     56.90 (-0.19)     &  56.69 (-0.16)   &   57.59 (-0.09) &    56.36 (-0.34)     &    55.21 (-1.31)   &  55.36 (-1.19)   &  \textbf{54.22} (-2.52)  \\ 
accidents            &   29.42 (-0.15)       &        27.76 (-0.17)   & 28.04 (-0.11)   &   26.84 (-0.21)      &     26.01 (-0.69)    & 27.17 (-0.13)  &   \textbf{25.61} (-1.00)  \\
retail        &     10.85 (-0.14)        &        \textbf{10.68} (-0.14)   & 10.90 (+0.10)  & 10.80 (-0.15)    &   10.86 (-0.14)     & 10.84 (-0.11) &  10.78 (-0.17) \\ 
pumsb-star          &   27.97 (-0.01)   & 24.94 (-0.01)     &   25.75 (+0.06)   &  23.94 (+0.04)  &     23.56 (-0.13)       &      24.90 (+0.05)          & \textbf{ 22.80 }(-0.23)  \\
dna                       &    79.86 (-0.35)       &       79.43 (-0.52)     & 78.80 (-0.35)  &       77.64 (-0.53)       &     78.07 (-1.78)         &  78.00 (-2.11) & \textbf{75.57} (-4.20) \\
kosarek                      &    10.90 (+0.13)        &     10.63 (+0.09)       & 12.24 (+0.21) &   10.65 (+0.06)          &       10.77 (+0.08)        &  10.97 (+0.14) & \textbf{10.56} (-0.04)  \\
msweb                       &     10.34 (-0.10)       &     \textbf{9.83} (-0.09)      & 10.50 (-0.08)  &    10.05 (-0.12)     &       10.75 (-0.09)   & 10.25 (-0.09) & 9.96 (-0.14) \\
book                       &    33.14 (-0.56)        &   32.62 (-0.70)    & 36.52 (-0.50) & 32.42 (-0.53)    &    31.81 (-1.70)        &  33.04 (-1.14) & \textbf{30.69} (-2.98)   \\
   eachmovie                   &    53.75 (+0.92)       &    52.19 (+0.91)     & 63.26 (+1.23)  &      51.14 (-1.19)     &   49.18 (-1.58)    &  48.53 (-2.69) & \textbf{45.29} (-5.12) \\
web-kb                      &     148.34 (-7.00)       &      144.82 (-7.02)     & 155.86 (-6.17) &      142.80 (-12.20)     &  140.60 (-11.14)  & 143.76 (-9.56) &\textbf{ 139.48} (-14.50) \\
reuters-52                     &    103.90 (+8.68)     &    100.82 (+8.19)     & 106.20 (+9.95) &       100.80 (+6.90)      &     99.57 (+6.40)     &  \textbf{98.01} (+8.34) & 98.22 (+4.42) \\
20ng                     &   139.72 (-15.33)       &     137.63 (-15.35)   & 149.20 (-14.99) & 138.17 (-16.62)      &     136.04 (-18.11)     & 140.19 (-15.28) & \textbf{135.42 }(-19.76)  \\
bbc                      &     242.90 (-11.08)      &    239.24 (-11.64)   & 248.00 (-11.04) &      233.90 (-19.23)       &     232.51 (-18.77)      &  234.53 (-18.75) & \textbf{229.84} (-23.53) \\
ad                      &    16.43 (-0.50)     &    14.98 (-0.56) &    15.69 (-0.63)    & 14.65 (-0.70) &     15.25 (-0.77)      &    15.25 (-0.56)      &  \textbf{14.48} (-0.84) \\
\bottomrule
\end{tabular}
    }
    \caption{Train negative log-likelihoods on 20 binary datasets, show together with the difference with the test negative log-likelihood (lower is better). A negative difference indicates that the training NLL is less (better) than the test NLL, while a positive difference indicates the opposite.}
    \label{tbl:debd_train}
\end{table*}

\end{document}